\newtheorem{lemma}{Lemma}
\newtheorem{proposition}{Proposition}
\newtheorem{theorem}{Theorem}
\newcommand{\specialcell}[2][c]{\begin{tabular}[c]{@{}#1@{}}#2\end{tabular}}
\begin{document}
%
\title{Efficient Spiking Neural Networks with Logarithmic Temporal Coding}
%
%
%

\author{Ming~Zhang,
        Nenggan~Zheng\textsuperscript{*},~\IEEEmembership{Member,~IEEE},
        De~Ma,
        Gang~Pan,~\IEEEmembership{Member,~IEEE},
        and~Zonghua~Gu,~\IEEEmembership{Senior Member,~IEEE,}
\thanks{Ming Zhang, De Ma, and Gang Pan are with College of Computer Science, Zhejiang University, Hangzhou, China.}
\thanks{Nenggan Zheng is with Qiushi Academy for Advanced Studies, Zhejiang University, Hangzhou, China (e-mail: zng@cs.zju.edu.cn).}
\thanks{Zonghua Gu is with Department of Applied Physics and Electronics, Ume\r{a} University, 90187 Ume\r{a}, Sweden.}
\thanks{*Corresponding author.}}

%
%

\ifCLASSOPTIONpeerreview
\markboth{Journal of \LaTeX\ Class Files}%
{Efficient Spiking Neural Networks with Logarithmic Temporal Coding}
\else
\markboth{Journal of \LaTeX\ Class Files}%
{Zhang \MakeLowercase{\textit{et al.}}: Efficient Spiking Neural Networks with Logarithmic Temporal Coding}
\fi
%



\maketitle

\begin{abstract}
A Spiking Neural Network (SNN) can be trained indirectly by first training an Artificial Neural Network (ANN) with the conventional backpropagation algorithm, then converting it into an SNN. The conventional rate-coding method for SNNs uses the number of spikes to encode magnitude of an activation value, and may be computationally inefficient due to the large number of spikes. Temporal-coding is typically more efficient by leveraging the timing of spikes to encode information. In this paper, we present Logarithmic Temporal Coding (LTC), where the number of spikes used to encode an activation value grows logarithmically with the activation value; and the accompanying Exponentiate-and-Fire (EF) spiking neuron model, which only involves efficient bit-shift and addition operations. Moreover, we improve the training process of ANN to compensate for approximation errors due to LTC. Experimental results indicate that the resulting SNN achieves competitive performance at significantly lower computational cost than related work.
\end{abstract}


\begin{IEEEkeywords}
Spiking neural networks; temporal coding; rate-coding: neuromorphic computing.
\end{IEEEkeywords}

%
\IEEEpeerreviewmaketitle

\section{Introduction}

\IEEEPARstart{D}{eep} Learning based on Artificial Neural Networks (ANNs) has achieved tremendous success in many application domains in recent years \cite{Hu2017, Kowsari2017, Cazenave2018}. Spiking Neural Networks (SNNs) use neuron action potentials, or spikes, for event-driven computation and communication. If the number of spikes are low, then most neurons and synapses in an SNN may be idle most of the time, hence the hardware implementation of SNNs can be much more efficient than conventional ANNs used in Deep Learning for inference tasks. Training, or learning, algorithms for SNNs \cite{gutig2006tempotron, Ponulak2010, Florian2012, mohemmed2012span, Neftci13, Diehl15STDP, Bohte2002, McKennoch2006, Booij2005, GhoshDastidar2009, Xu2013} are an active area of research, and are not as mature as conventional Deep Learning.
Several recent SNN learning algorithms based on spiking variants of backpropagation \cite{OConnor16, Lee16} achieved good performance, but their neuron models incur high computational cost. One alternative is to use ANN-to-SNN conversion techniques \cite{Cao15, Diehl15Conversion, Rueckauer2017, Hunsberger15, Liu17, Zambrano17, Rueckauer18}, which works by first training an ANN with the conventional backpropagation algorithm, then converting it into an SNN. Most existing ANN-to-SNN conversion methods are based on rate-coding,
where activations in the ANN are approximated by firing rates of the corresponding spike trains in the SNN, and the number of spikes for encoding a real-valued activation grows linearly with the activation value. For current methods \cite{Diehl15Conversion, Rueckauer2017} to achieve performance comparable to the ANN, the neurons in the SNN have to fire a large number of spikes, which leads to high computational cost. Although several recent methods \cite{Zambrano17, Rueckauer18} reduced the number of spikes by employing more efficient neural coding,
these methods relied on complex neuron models that continually perform expensive operations.

In this paper, we propose an ANN-to-SNN conversion method based on novel \emph{Logarithmic Temporal Coding (LTC)},
where the number of spikes for encoding an activation grows logarithmically with the activation value in the worse case.
LTC is integrated with the \emph{Exponentiate-and-Fire (EF)} spiking neuron model. \emph{Note that the EF neuron model is not biologically realistic. It is an artificial model that we designed to use in conjunction with LTC for efficient computation in an SNN.}
If implemented with fixed-point arithmetic, an EF neuron performs a bit shift every time step and an addition for every incoming spike.
Furthermore, we introduce approximation errors of LTC into the ANN, and leverage the training process of the ANN to compensate for the approximation errors,
eliminating most of performance drop due to ANN-to-SNN conversion.
Compared with rate-coding methods, our temporal-coding method achieves similar performance at significantly lower computational cost.
Experimental results show that, for a CNN architecture with sufficient model capacity,
the proposed method outperforms rate-based coding, achieving test accuracy of 99.41\% on the MNIST dataset, and computational cost reduction of 93.61\%.

%

\section{Related work}


Learning for single-layer SNNs is a well-studied topic.
Supervised learning algorithms aimed to train an SNN to classify input spatiotemporal patterns \cite{gutig2006tempotron}
or to generate control signals with precise spike times in response to input spatiotemporal patterns \cite{Ponulak2010, Florian2012, mohemmed2012span}.
The Tempotron rule \cite{gutig2006tempotron} trained a spiking neuron to perform binary classification by firing one or more spikes in response to its associated class.
ReSuMe \cite{Ponulak2010} trained spiking neurons to generate target spike trains in response to given spatiotemporal patterns.
Supervised learning was achieved by combining learning windows of Hebbian rules and a concept of remote supervision.
The E-learning rule of Chronotron \cite{Florian2012} improved memory capacity by minimizing a modified version of the Victor and Purpura (VP) distance between the output spike train and the target spike train with gradient descent.
SPAN \cite{mohemmed2012span} also achieved improved memory capacity over ReSuMe,
but with a simpler learning rule than Chronotron.
The learning rule was a spiking variant of the Delta rule,
where the input, output, and target values are replaced with
convolved spike trains.
These algorithms depend on predefined target spike trains,
which are not available for neurons in the hidden layers of a multi-layer SNN.
Unsupervised learning rules aimed to train an SNN to detect spatiotemporal patterns or extract features from input stimuli.
In \cite{Diehl15STDP}, a population of spiking neurons connected with lateral inhibitory synapses were trained using Spike Time-Dependent Plasticity (STDP) to recognize different spatiotemporal patterns.
In \cite{Neftci13}, an event-driven variation of contrastive divergence was proposed to train a restricted Boltzmann machine constructed with integrate-and-fire neurons.
These algorithms rely on specific network topologies with a single layer of spiking neurons.
All the learning algorithms are limited to SNNs with a single layer of neurons.
There is a large performance gap between the resulting SNNs and traditional ANNs.

Multi-layer SNNs are more difficult to train than single-layer SNNs.
Backpropagation \cite{LeCun98} cannot be directly applied to multi-layer SNNs due to the discontinuity associated with spiking activities.
SpikeProp \cite{Bohte2002} adapted backpropagation for SNN,
circumventing the discontinuity by assuming the membrane potential to be a linear function of time for a small region around spike times.
SpikeProp was extended by later works to use Resilient Propagation and QuickProp \cite{McKennoch2006},
and to train neurons in hidden layers and the output layer to fire multiple spikes \cite{Booij2005, GhoshDastidar2009, Xu2013}.
However, there is still a large performance gap between these SNNs and traditional ANNs.
Recent works avoided making assumptions about the discontinuity.
In \cite{OConnor16}, a custom spiking neuron model incorporated a spike generation algorithm to approximate intermediate values of both the forward pass and the backward pass with spike trains.
The spike generation algorithm had to add the encoded value to its internal state for every neuron at every time step.
In \cite{Lee16}, the membrane potential of a neuron was assumed to be a differentiable function of postsynaptic potentials and the afterpotential,
and the backward pass propagated errors through the postsynaptic potentials and the afterpotential instead of input and output spike times.
The exponential decay of postsynaptic potentials and afterpotentials require two multiplications be performed for every neuron at every time step.
These learning algorithms trained small SNNs with several layers to achieve comparable performance to that of traditional ANNs.
However, they rely on complex neuron models that perform expensive arithmetic operations every time step.
Furthermore, how these algorithms scale to deeper SNNs remains unclear.

Another line of work trained SNNs indirectly by converting a trained ANN into its equivalent SNN.
In \cite{Cao15}, an ANN with Rectified Linear Unit (ReLU) nonlinearity was trained using backpropagation
and the weights were then directly mapped to an SNN of Integrate-and-Fire (IF) neurons with the same topology.
In a similar way,
an ANN with Softplus \cite{Hunsberger15} or Noisy Softplus \cite{Liu17} nonlinearity
could be converted to an SNN of more biologically plausible Leaky Integrate-and-Fire (LIF) neurons.
There was a significant performance gap between the resulting SNN and the original ANN.
The performance gap was narrowed by weight normalization \cite{Diehl15Conversion} and resetting membrane potential by subtracting the firing threshold \cite{Rueckauer2017}.
With these improvements, the resulting SNNs achieved performance comparable to the corresponding ANNs.
All of these ANN-to-SNN conversion methods were based on rate coding,
where the number of spikes it takes to encode an activation grows linearly with the activation.
Empirically, the neurons in the SNN have to maintain high firing rates to achieve a comparable performance to the original ANN.
Since the computational cost a spiking neuron incurs is proportional to the number of incoming spikes,
spike trains generated according to rate coding impose high computational cost on downstream neurons.

Recent ANN-to-SNN conversion methods reduced the number of spikes used to encode activations
by employing more efficient neural coding.
In \cite{Zambrano17}, an ANN was converted to an Adapting SNN (AdSNN) based on synchronous Pulsed Sigma-Delta coding.
When driven by a strong stimulus,
an Adaptive Spiking Neuron (ASN) adaptively raises its dynamic firing threshold every time it fires a spike,
reducing its firing rate.
However, an ASN has to perform four multiplications every time step to update its postsynaptic current, firing threshold, and refractory response.
In \cite{Rueckauer18}, an ANN was converted to an SNN based on temporal coding,
where an activation in the ANN was approximated by the latency to the first spike of the corresponding spike train in the SNN.
Thus, at most one spike needs to be fired for each activation.
However, each Time-To-First-Spike (TTFS) neuron keeps track of synapses which have ever received an input spike,
and has to add the sum of the synaptic weights to its membrane potential every time step.
Although these methods reduce the number of spikes,
their complex neuron models still incur high computational cost.

ANN-to-SNN conversion approximates real-valued activations with spike trains.
The approximation errors contribute to the performance gap between the SNN and the ANN.
Fortunately, a deep ANN can be trained to tolerate the approximation errors,
if the approximation errors are introduced during the training phase.
In \cite{Miyashita16}, each activation of an ANN was approximated with a power of two,
where the exponents of the powers were constrained within a set of several consecutive integers.
The error tolerance of an ANN allows it to compensate for approximation errors in the corresponding SNN during the training phase,
which in turn helps close the performance gap between the SNN and the ANN.

Different from existing ANN-to-SNN conversion methods,
we reduce both the number of spikes and the complexity of the neuron model.
We propose encoding activations with Logarithmic Temporal Coding (LTC),
where the number of spikes grows logarithmically with the encoded activation in the worst case.
If implemented with fixed-point arithmetic,
our Exponentiate-and-Fire (EF) neuron model involves only bit shifts and additions.
A neuron performs a bit shift every time step and an addition for every incoming spike.

\section{Method}


Every time a spiking neuron receives an input spike, the membrane potential of the neuron is increased by the postsynaptic potential (PSP).
Evaluation of PSPs contribute to most of the computational cost of an SNN.
To reduce the number of spikes used to encode every activation throughout the ANN,
we propose \emph{Logarithmic Temporal Coding (LTC)}.
A real-valued activation is first approximated by retaining a predefined subset of bits in its binary representation.
Then, a spike is generated for each of the remaining 1 bits and no spike is generated for the 0 bits.
The number of spikes for encoding an activation grows logarithmically, rather than linearly, with the activation in the worst case.

We propose \emph{Exponentiate-and-Fire (EF) neuron} used in conjunction with LTC, which performs equivalent computation to that of an analog neuron with Rectified Linear Unit (ReLU) nonlinearity. 
Furthermore, we propose \emph{Error-tolerant ANN training}, which leverages the ANN training process to compensate for approximation errors introduced by LTC and
reduces the chance for EF neurons to fire undesired spikes.

We use the term ``activation'' to refer to output values of all analog neurons in an ANN, including neurons in the input, hidden and output layers.

\subsection{Logarithmic temporal coding}

To encode a real-valued activation into a spike train,
the activation is first represented as a binary number.
Then, the activation is approximated by retaining only a subset of the bits of the binary number at a predefined set of consecutive positions;
the other bits of the binary number are set to zero.
Finally, for each remaining 1 bit of the binary number,
a spike is generated with spike timing determined by the position of the bit in the binary number,
while no spike is generated for the 0 bits.

An real-valued activation $a \geq 0$ can be represented as the sum of a possibly infinite series of powers of two $2^e$ with different integer exponents $e$.
We approximate the real-valued activation $a$ by constraining the exponents $e$ within a predefined exponent range $\{ e_{min}, \ldots , e_{max} \}$,
i.e., a finite set of consecutive integers from $e_{min}$ to $e_{max}$.
This approximation can be formulated as a closed-form equation:
\begin{equation}
    \tilde{a} =
    \begin{cases}
        0 & \text{if } a < 2^{e_{min}}, \\
        \lfloor a / 2^{e_{min}} \rfloor \cdot 2^{e_{min}} & 2^{e_{min}} \leq a < 2^{e_{max} + 1}, \\
        2^{e_{max} + 1} - 2^{e_{min}} & \text{if } a \geq 2^{e_{max} + 1}.
    \end{cases}
    \label{eq_logarithmic_approximation_multi_power}
\end{equation}
Since the approximation defined by Eqn. \ref{eq_logarithmic_approximation_multi_power} may involve multiple powers of two,
we refer to this approximation as \emph{Multi-Power Logarithmic Approximation (Multi-Power LA)}.

As a special case, if we further require the approximation to involve at most one single power of two,
the approximation $\tilde{a}$ reduces to \emph{Single-Power Logarithmic Approximation (Single-Power LA)}:
\begin{equation}
    \tilde{a} =
    \begin{cases}
        0 & \text{if } a < 2^{e_{min}}, \\
        2^{\lfloor \log_2{a} \rfloor} & 2^{e_{min}} \leq a < 2^{e_{max} + 1}, \\
        2^{e_{max}} & \text{if } a \geq 2^{e_{max} + 1}.
    \end{cases}
    \label{eq_logarithmic_approximation_single_power}
\end{equation}
We refer to multi-power LA and single-power LA collectively as \emph{Logarithmic Approximation (LA)}.


In order to generate an LTC spike train from a logarithmic approximation $\tilde{a}$,
we define a time window with $T = e_{max} - e_{min} + 1$ discrete time steps $\{ 0, 1, \dots , T - 1 \}$.
If a power of two $2^e$ contributes to the logarithmic approximation $\tilde{a}$,
i.e., $2^e$ is present in the series of powers of two of $\tilde{a}$,
then a spike is present in the LTC spike train with a spike time $t = e_{max} - e$.
There are two variants of LTC:
\emph{Multi-spike LTC} corresponds to multi-power LA,
while \emph{Single-spike LTC} corresponds to single-power LA.

Obviously, single-spike LTC encodes a real-valued activation into a spike train with at most one single spike.
For multi-spike LTC, we derive an upper bound of the number of spikes used to encode a real-valued activation,
as Proposition \ref{prp_number_of_spikes_upper_bound_multi_spike_ltc} states.



\begin{proposition}
Suppose multi-spike LTC encodes a real value $a$ into a spike train with $n_s$ spikes.
If $a < 2^{e_{min}}$, then $n_s = 0$;
if $2^{e_{min}} \leq a < 2^{e_{max} + 1}$, then $n_s \leq \lfloor \log_2{a} \rfloor - e_{min} + 1$;
if $a \geq 2^{e_{max} + 1}$, then $n_s = e_{max} - e_{min} + 1$.
\label{prp_number_of_spikes_upper_bound_multi_spike_ltc}
\end{proposition}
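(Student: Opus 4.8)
The plan is to translate the spike count $n_s$ into a purely combinatorial quantity: since multi-spike LTC emits exactly one spike per power of two present in the series expansion of $\tilde{a}$, the count $n_s$ equals the number of distinct powers of two appearing in $\tilde{a}$, equivalently the number of $1$ bits in the binary representation of $\tilde{a}$ at positions $e_{min}, \ldots, e_{max}$. With this reformulation, I would read each of the three cases of Eqn.~\ref{eq_logarithmic_approximation_multi_power} directly off the closed form of $\tilde{a}$ and count bits.

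The two boundary cases are immediate. When $a < 2^{e_{min}}$ we have $\tilde{a} = 0$, which contains no power of two, so $n_s = 0$. When $a \geq 2^{e_{max}+1}$ we have $\tilde{a} = 2^{e_{max}+1} - 2^{e_{min}}$; expanding this as a geometric sum gives $\tilde{a} = \sum_{e=e_{min}}^{e_{max}} 2^{e}$, so it contains exactly one copy of each of the $T = e_{max} - e_{min} + 1$ powers, whence $n_s = e_{max} - e_{min} + 1$.

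The middle case $2^{e_{min}} \leq a < 2^{e_{max}+1}$ is where the real work lies, since here the claim is an inequality rather than an equality. Writing $m = \lfloor a / 2^{e_{min}} \rfloor$, we have $\tilde{a} = m \cdot 2^{e_{min}}$, so multiplying by $2^{e_{min}}$ is a left bit-shift that preserves the Hamming weight; thus $n_s$ equals the number of $1$ bits of $m$, which is at most the total bit length of $m$, namely $\lfloor \log_2 m \rfloor + 1$ (using $m \geq 1$, which follows from $a \geq 2^{e_{min}}$). It then suffices to show $\lfloor \log_2 m \rfloor = \lfloor \log_2 a \rfloor - e_{min}$. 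I would establish this by sandwiching: from $2^{\lfloor \log_2 a \rfloor} \leq a < 2^{\lfloor \log_2 a \rfloor + 1}$ and the fact that $\lfloor \log_2 a \rfloor - e_{min} \geq 0$ is a nonnegative integer, dividing by $2^{e_{min}}$ and taking the floor yields $2^{\lfloor \log_2 a \rfloor - e_{min}} \leq m < 2^{\lfloor \log_2 a \rfloor - e_{min} + 1}$, which pins down $\lfloor \log_2 m \rfloor$ exactly and gives the stated bound.

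The main obstacle is the floor bookkeeping in this middle case: one must check that the lower bound $2^{\lfloor \log_2 a \rfloor - e_{min}}$ survives the floor operation, which it does precisely because its exponent is a nonnegative integer, so the bound is itself an integer and is not lost when flooring $a/2^{e_{min}}$. The remaining ingredient is conceptual rather than computational, namely recognizing that it is the \emph{count} of $1$ bits, not the magnitude of $\tilde{a}$, that governs $n_s$, and that a bit-shift leaves this count invariant; once that is in place the three cases close without further calculation.
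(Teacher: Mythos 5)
Your proof is correct, and it shares the paper's skeleton---reinterpret $n_s$ as the number of powers of two present in $\tilde{a}$, then case-split on the closed form of Eqn.~\ref{eq_logarithmic_approximation_multi_power}---but your middle case runs on genuinely different machinery. The paper never touches the closed form $\lfloor a / 2^{e_{min}} \rfloor \cdot 2^{e_{min}}$ there: it observes that any contributing power must satisfy $2^e \leq \tilde{a} \leq a$, so the contributing exponents lie in $(- \infty , \lfloor \log_2{a} \rfloor ] \cap \{ e_{min}, \ldots , e_{max} \} = \{ e_{min}, \ldots , \lfloor \log_2{a} \rfloor \}$, and the bound $\lfloor \log_2{a} \rfloor - e_{min} + 1$ is simply the cardinality of that set in the worst case. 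You instead pass to the integer $m = \lfloor a / 2^{e_{min}} \rfloor$, bound the Hamming weight by the bit length $\lfloor \log_2{m} \rfloor + 1$, and pin down $\lfloor \log_2{m} \rfloor = \lfloor \log_2{a} \rfloor - e_{min}$ by a floor sandwich. Both arguments are sound and land on the same count; the paper's magnitude argument is shorter and needs no floor bookkeeping (the identity $\lfloor \log_2{\lfloor a / 2^{e_{min}} \rfloor} \rfloor = \lfloor \log_2{a} \rfloor - e_{min}$ never has to be established), while your version makes explicit exactly which bit positions $\tilde{a}$ can occupy and correctly isolates the one delicate point---that $\lfloor \log_2{a} \rfloor - e_{min} \geq 0$ makes the lower end of the sandwich an integer that survives the floor, which is where a careless version of this route would fail. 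Your handling of the two extreme cases (in particular the geometric-sum expansion $2^{e_{max} + 1} - 2^{e_{min}} = \sum_{e = e_{min}}^{e_{max}}{2^e}$ showing every position fires) matches the paper's in substance, just stated more explicitly.
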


\begin{proof}
Let $\tilde{a}$ be the multi-power LA of $a$.
Any power $2^e$ with an integer exponent $e \geq \lfloor \log_2{a} \rfloor + 1$ cannot contribute to $\tilde{a}$,
because $2^e > a \geq \tilde{a}$.
For a power $2^e$ with an integer exponent $e$ to contribute to $\tilde{a}$,
$e \in (- \infty , \lfloor \log_2{a} \rfloor ] \cap \{ e_{min}, \ldots , e_{max} \}$.

If $a < 2^{e_{min}}$, $(- \infty , \lfloor \log_2{a} \rfloor ] \cap \{ e_{min}, \ldots , e_{max} \} = \emptyset$.
Hence, no power of two contributes to the multi-power LA of $a$.
According to LTC, the spike train contains no spike, hence $n_s = 0$.

If $2^{e_{min}} \leq a < 2^{e_{max} + 1}$,
$(- \infty , \lfloor \log_2{a} \rfloor ] \cap \{ e_{min}, \ldots , e_{max} \} = \{ e_{min}, \ldots , \lfloor \log_2{a} \rfloor \}$.
In the worst-case, every $2^e$ with an integer exponent in the set $\{ e_{min}, \ldots , \lfloor \log_2{a} \rfloor \}$ contributes to $\tilde{a}$.
Thus, $n_s \leq \lfloor \log_2{a} \rfloor - e_{min} + 1$.

If $a \geq 2^{e_{max} + 1}$,
$(- \infty , \lfloor \log_2{a} \rfloor ] \cap \{ e_{min}, \ldots , e_{max} \} = \{ e_{min}, \ldots , e_{max} \}$.
Every $2^e$ with an integer exponent $e \in \{ e_{min}, \ldots , e_{max} \}$ contributes to $\tilde{a}$, hence $n_s = e_{max} - e_{min} + 1$.
\end{proof}

The logarithmic increase in the number of spikes for LTC is much slower than the linear increase for rate coding.
The slow increase comes at the cost of significant approximation error.
Since both LA and LTC are deterministic,
the approximation error can be easily introduced into activations of an ANN during the training phase.
We leverage the training process of an ANN to compensate for the approximation errors,
as detailed in Section \ref{sec_ann_training}.

\subsection{Exponentiate-and-Fire (EF) neuron model}

Figure \ref{fig_ef_neuron_computation_graph} illustrates the Exponentiate-and-Fire (EF) neuron model. An EF neuron integrates input spikes using an exponentially growing PSP kernel,
and generates output spikes using an exponentially growing afterhyperpolarizing potential (AHP) kernel.
With the exponentially growing kernel, an EF neuron is able to perform computation that is equivalent to the computation of an analog neuron with ReLU nonlinearity. 

\begin{figure*}[!t]
    \centering
    \includegraphics[width=0.7 \textwidth]{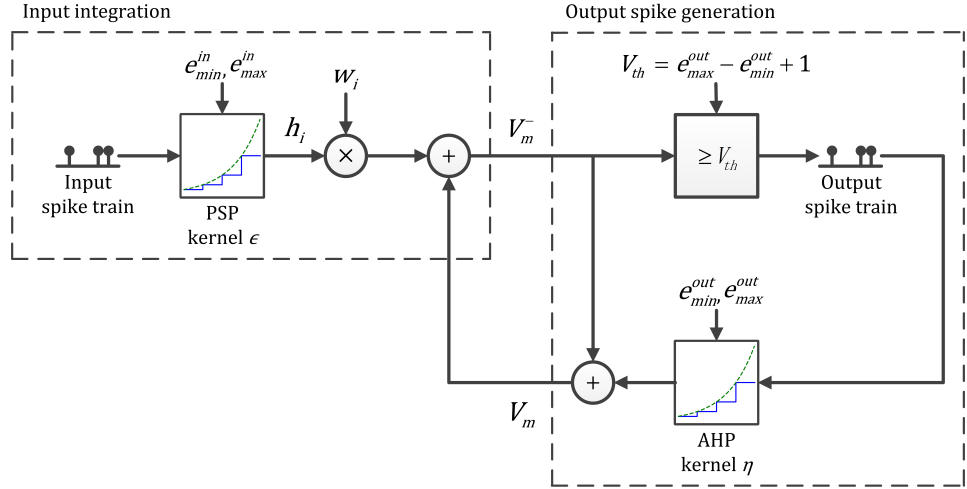}
    \caption{Computation graph of an Exponentiate-and-Fire (EF) neuron.}
    \label{fig_ef_neuron_computation_graph}
\end{figure*}

The EF neuron model is based on the Spike Response Model (SRM) \cite{Gerstner2002} with discrete time. 
The membrane potential $V_m(t)$ at time $t \in \mathbb{Z}$ is given by:
\begin{equation}
    V_m(t) = \sum_{i \in \Gamma}{w_i \cdot h_i(t)} + \sum_{t^{out} \in \mathcal{F}^{out}}{\eta (t - t^{out}) \cdot \mathbbm{1}(t \geq t^{out})}
    \label{eq_membrane_potential}
\end{equation}
where $\Gamma$ is the set of synapses; $w_i$ is weight of synapse $i$; $h_i$ is the total PSP elicited by the input spike train at synapse $i$; $\mathcal{F}^{out} \subseteq \mathbb{Z}$ is the set of output spike times; $\eta (t - t^{out})$ is the AHP elicited by the output spike at time $t^{out}$; $\mathbbm{1}( \cdot )$ evaluates to 1 if and only if the condition enclosed within the brackets is true.
$V_m(t)$ is the \emph{pre-reset membrane potential} immediately before the reset:
\begin{equation}
    V_m^-(t) = V_m(t) - \eta (0) \cdot \mathbbm{1}(t \in \mathcal{F}^{out})
    \label{eq_pre_reset_membrane_potential}
\end{equation}

\subsubsection{Input integration}
Input spike trains of a neuron are generated using the input exponent range $\{ e^{in}_{min}, \ldots , e^{in}_{max} \}$ of the neuron,
and presented to the neuron during its input time window $\{ 0, \ldots , T^{in} - 1 \}$, where $T^{in} = e^{in}_{max} - e^{in}_{min} + 1$. The exponentially growing PSP kernel $\epsilon (t - t^{in})$ is used to integrates input spikes:
\begin{equation}
    \epsilon (t - t^{in}) = 2^{e^{in}_{min}} \cdot 2^{t - t^{in}} \cdot \mathbbm{1}(t \geq t^{in})
    \label{eq_psp_kernel}
\end{equation}
where $t \in \mathbb{Z}$ is the current time; $t^{in} \in \mathbb{Z}$ is time of the input spike. With this PSP kernel, the PSP elicited by an input spike is equal to $2^{e^{in}_{min}}$ at the spike time $t = t^{in}$, and doubles every time step thereafter.

The total PSP elicited by an input spike train at the synapse $i$ is the superposition of PSPs elicited by all spikes in the spike train:
\begin{equation}
    h_i(t) = \sum_{t^{in} \in \mathcal{F}^{in}_i}{\epsilon (t - t^{in})}
    \label{eq_psp}
\end{equation}
where $\mathcal{F}^{in}_i$ is the set of spike times of the input spike train.



If the EF neuron does not fire any output spike before $t = T^{in} - 1$,
then no output spike would interfere with input integration,
and the EF neuron computes a weighted sum of the LAs of its input spike trains,
as Lemma \ref{lem_ef_membrane_potential_recovers_preactivation} states.

\begin{lemma}
The pre-reset membrane potential of an EF neuron $V_m^-(T^{in} - 1) = \sum_{i \in \Gamma }{w_i \cdot \tilde{a}_i}$,
if the EF neuron does not fire any output spike during the time interval $\{ 0, \ldots , T^{in} - 2 \}$,
where $\tilde{a}_i$ is the LA of the $i$-th input LTC spike train.
\label{lem_ef_membrane_potential_recovers_preactivation}
\end{lemma}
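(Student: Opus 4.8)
The plan is to evaluate $V_m^-(T^{in}-1)$ directly from its definition in Eqn.~\ref{eq_pre_reset_membrane_potential}, splitting the membrane potential of Eqn.~\ref{eq_membrane_potential} into its input-integration term and its AHP term and treating each separately.

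First I would establish the core identity $h_i(T^{in}-1) = \tilde{a}_i$ for every synapse $i$, i.e., that at the last step of the input window the exponentially growing PSP kernel reconstructs exactly the logarithmic approximation carried by the input spike train. The key is to track the exponent arithmetic. By the LTC encoding rule, an input spike at time $t^{in}$ corresponds to the power $2^{e}$ with $e = e^{in}_{max} - t^{in}$ contributing to $\tilde{a}_i$. Substituting $t = T^{in}-1 = e^{in}_{max} - e^{in}_{min}$ into the PSP kernel of Eqn.~\ref{eq_psp_kernel}, and noting that every spike time satisfies $t^{in} \leq T^{in}-1$ so the indicator is always $1$, I would simplify
\begin{equation}
\epsilon(T^{in}-1 - t^{in}) = 2^{e^{in}_{min}} \cdot 2^{(e^{in}_{max} - e^{in}_{min}) - t^{in}} = 2^{e^{in}_{max} - t^{in}} = 2^e.
\end{equation}
Summing over all spikes in $\mathcal{F}^{in}_i$ via Eqn.~\ref{eq_psp} then gives $h_i(T^{in}-1)$ as the sum of $2^e$ over exactly the powers of two present in $\tilde{a}_i$, which by the definition of the logarithmic approximation equals $\tilde{a}_i$.

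Next I would handle the AHP term at $t = T^{in}-1$. The indicator $\mathbbm{1}(t \geq t^{out})$ restricts the sum to output spikes at times in $\{0, \ldots, T^{in}-1\}$. By hypothesis no output spike occurs in $\{0, \ldots, T^{in}-2\}$, so the only possible contributor is an output spike at $t^{out} = T^{in}-1$ itself, contributing $\eta(T^{in}-1 - (T^{in}-1)) = \eta(0)$. Comparing with the pre-reset correction $-\eta(0)\cdot\mathbbm{1}(T^{in}-1 \in \mathcal{F}^{out})$ in Eqn.~\ref{eq_pre_reset_membrane_potential}, I would observe that whether or not such a spike is fired, the AHP contribution to $V_m(T^{in}-1)$ and the pre-reset subtraction exactly cancel. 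Thus the AHP term vanishes from $V_m^-(T^{in}-1)$, leaving $V_m^-(T^{in}-1) = \sum_{i\in\Gamma} w_i \cdot h_i(T^{in}-1) = \sum_{i\in\Gamma} w_i \cdot \tilde{a}_i$.

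I expect the main obstacle to be the first step: verifying carefully that the particular alignment between the LTC spike-time convention $t = e^{in}_{max} - e$ and the doubling PSP kernel yields precisely $2^e$ at the terminal time step (rather than some scaled version), so that the superposition of PSPs reconstructs the full logarithmic approximation. The AHP cancellation is comparatively routine once the no-early-spike hypothesis is invoked to eliminate all output spikes except a possible one at $T^{in}-1$, whose contribution is exactly neutralized by the pre-reset correction.
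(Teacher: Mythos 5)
Your proposal is correct and follows essentially the same route as the paper's proof: both establish the key identity $h_i(T^{in}-1)=\tilde{a}_i$ by aligning the LTC spike-time convention $t^{in}=e^{in}_{max}-e$ with the doubling PSP kernel, then use the no-early-spike hypothesis to reduce $V_m^-(T^{in}-1)$ to the weighted sum $\sum_{i\in\Gamma}w_i\tilde{a}_i$. Your explicit verification that a possible output spike at $t^{out}=T^{in}-1$ is exactly neutralized by the $-\eta(0)\cdot\mathbbm{1}(T^{in}-1\in\mathcal{F}^{out})$ correction in Eqn.~\ref{eq_pre_reset_membrane_potential} is a welcome bit of care that the paper leaves implicit in its definition of the pre-reset potential, but it does not change the argument.
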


\begin{proof}
According to LA, $\tilde{a}_i = \sum_{k}{2^{e^{in}_{i,k}}}$.
According to LTC, the spike time corresponding to the power $2^{e^{in}_{i,k}}$ is $t^{in}_{i,k} = e^{in}_{max} - e^{in}_{i,k}$.
The total PSP elicited by the $i$-th input LTC spike train at $t = T^{in} - 1$ is given by
\begin{equation}
h_i(T^{in} - 1) = \sum_{t^{in}_{i, k} \in \mathcal{F}^{in}_i}{\epsilon (T^{in} - 1 - t^{in}_{i, k})} = \sum_{k}{2^{e^{in}_{i, k}}} = \tilde{a}_i
\end{equation}
Since the EF neuron does not fire any output spike before $t = T^{in} - 1$,
$V_m^-(T^{in} - 1)$ reduces to a weighted sum of PSPs elicited by the input spike trains:
\begin{equation}
V_m^-(T^{in} - 1) = \sum_{i \in \Gamma}{w_i \cdot h_i(T^{in} - 1)} = \sum_{i \in \Gamma}{w_i \cdot \tilde{a}_i}
\end{equation}
completing the proof.
\end{proof}

\subsubsection{Output spike generation}
\label{sec_spike_generation}

The goal of an EF neuron is to generate an output LTC spike train that encodes $\max (V_m^-(T^{in} - 1), 0)$
using its output exponent range $\{ e^{out}_{min}, \ldots , e^{out}_{max} \} \subseteq \mathbb{Z}$ and
present the spike train within its output time window.
The output time window $\{ T^{in} - 1, \ldots , T^{in} + T^{out} - 2 \}$ starts at the last time step $T^{in} - 1$ of the input time window,
and lasts for $T^{out} = e^{out}_{max} - e^{out}_{min} + 1$ time steps.

An EF neuron generates an output spike train by thresholding its exponentially growing membrane potential.
Specifically, the EF neuron doubles its membrane potential every time step after the time step $T^{in} - 1$,
as dictated by the exponentially growing PSP kernel and AHP kernel (detailed below),
until its pre-reset membrane potential reaches its firing threshold $V_{th} = 2^{e^{out}_{max}}$ from below,
when it fires an output spike at time $t^{out}$, and its membrane potential is reset.

A \emph{Multi-Spike EF neuron} resets its membrane potential by subtracting the firing threshold from it:
\begin{equation}
    \eta (t - t^{out}) = - V_{th} \cdot 2^{t - t^{out}}
    \label{eq_reset_by_subtraction}
\end{equation}
A \emph{Single-Spike EF neuron} resets its membrane potential to 0:
\begin{equation}
    \eta (t - t^{out}) = - V_m^-(t^{out}) \cdot 2^{t - t^{out}}
    \label{eq_reset_to_zero}
\end{equation}

After resetting its membrane potential,
a multi-spike EF neuron doubles its membrane potential every time step thereafter,
and may fire subsequent output spikes. In contrast, a single-spike EF neuron does not fire any subsequent spike,
since its membrane potential remains zero after the reset.

If the EF neuron receives all input spikes within its input time window,
then no input spike would interfere with output spike generation during its output time window,
and the EF neuron generates the desired output LTC spike train within its output time window,
as Lemma \ref{lem_ef_neuron_spike_generation_consistency} states.

\begin{lemma}
An EF neuron generates an output LTC spike train that encodes $\max (V_m^-(T^{in} - 1), 0)$
using its output exponent range and
presents the spike train within its output time window,
if the EF neuron does not receive any input spike after the end of its input time window.
\label{lem_ef_neuron_spike_generation_consistency}
\end{lemma}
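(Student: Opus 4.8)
The plan is to track the membrane potential over the output time window as a scaled, doubling quantity and to recognize the firing-and-reset dynamics as greedy binary-digit extraction, which exactly reproduces the logarithmic approximation and its LTC encoding. First I would set $V := \max(V_m^-(T^{in}-1), 0)$ and dispose of the trivial case: if $V_m^-(T^{in}-1) \le 0$ the target value is $0$, and since (as shown next) the membrane merely doubles each step once inputs have ceased, it stays nonpositive, never reaches $V_{th}=2^{e^{out}_{max}}>0$, and no spike is emitted, matching the empty encoding of $0$. For the main case $V_m^-(T^{in}-1)>0$, the central reduction is to show from Eqn \ref{eq_membrane_potential} together with the doubling PSP kernel of Eqn \ref{eq_psp_kernel} and the doubling AHP kernels of Eqns \ref{eq_reset_by_subtraction}--\ref{eq_reset_to_zero} that, because no input spike arrives after $T^{in}-1$, every term already present in $V_m(\cdot)$ at $t=T^{in}-1$ doubles at each subsequent step. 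Writing $\tau = t-(T^{in}-1)$ for the position inside the output window and $m_\tau$ for the pre-reset membrane, this yields the clean recurrence $m_{\tau+1} = 2\,(m_\tau - V_{th}\,d_\tau)$ for the multi-spike neuron, where $d_\tau = \mathbbm{1}(m_\tau \ge V_{th})$ indicates a spike at relative time $\tau$; a newly fired spike contributes $\eta(0)=-V_{th}$ at its own step and then doubles in lockstep with everything else, so the subtraction persists exactly as a doubling offset. Dividing by $V_{th}$ and setting $u_\tau = m_\tau/2^{e^{out}_{max}}$ gives $u_0 = V/2^{e^{out}_{max}}$ and $u_{\tau+1}=2(u_\tau-d_\tau)$ with $d_\tau=\mathbbm{1}(u_\tau\ge 1)$.

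Next I would identify this recurrence with greedy binary expansion from the exponent $e^{out}_{max}$ downward. Maintaining the invariant $u_\tau = 2^{\tau-e^{out}_{max}}\bigl(V - R_\tau\bigr)$, where $R_\tau=\sum_{\tau'<\tau} d_{\tau'}2^{e^{out}_{max}-\tau'}$ is the total already subtracted, I get $d_\tau = \mathbbm{1}\bigl(V-R_\tau \ge 2^{e^{out}_{max}-\tau}\bigr)$, which is precisely the rule deciding whether the power $2^{e}$ with $e=e^{out}_{max}-\tau$ belongs to the multi-power LA of $V$. Running $\tau$ over $\{0,\dots,T^{out}-1\}$ covers exactly the exponents $\{e^{out}_{min},\dots,e^{out}_{max}\}$, so $R_{T^{out}}$ reproduces $\tilde V$ of Eqn \ref{eq_logarithmic_approximation_multi_power} in all three regimes (below range gives all-zero digits, in range gives $\lfloor V/2^{e^{out}_{min}}\rfloor\cdot 2^{e^{out}_{min}}$, and $V\ge 2^{e^{out}_{max}+1}$ keeps the remainder above threshold at every step, giving the all-ones word $2^{e^{out}_{max}+1}-2^{e^{out}_{min}}$). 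Since the neuron emits a spike at $t=(T^{in}-1)+\tau$ exactly when $d_\tau=1$, and LTC places the spike for power $2^{e}$ at that same time (the window being the input-window timing $e^{out}_{max}-e$ shifted to start at $T^{in}-1$), the generated train is the LTC encoding of $\tilde V$, and all spikes lie in the output window $\{T^{in}-1,\dots,T^{in}+T^{out}-2\}$.

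For the single-spike neuron I would instead use the reset-to-zero kernel of Eqn \ref{eq_reset_to_zero}: at the firing step $\eta(0)=-V_m^-(t^{out})$ drives the membrane to $0$, and its subsequent doubling cancels the doubling residual input, so the membrane stays $0$ and no further spike occurs. Because $m_\tau = V\cdot 2^\tau$ up to the first crossing, the unique spike fires at $\tau^\ast = e^{out}_{max}-\lfloor\log_2 V\rfloor$ when $2^{e^{out}_{min}}\le V<2^{e^{out}_{max}+1}$, i.e. at the exponent $\lfloor\log_2 V\rfloor$, and at $\tau=0$ when $V\ge 2^{e^{out}_{max}+1}$; this matches $2^{\lfloor\log_2 V\rfloor}$ and $2^{e^{out}_{max}}$ respectively in Eqn \ref{eq_logarithmic_approximation_single_power}, while $V<2^{e^{out}_{min}}$ never crosses within the window and yields no spike.

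I expect the main obstacle to be the reduction in the first paragraph: establishing rigorously, directly from the SRM membrane equation, that the joint effect of the ever-doubling PSP contributions and the ever-doubling AHP contributions collapses to the single scalar recurrence $u_{\tau+1}=2(u_\tau-d_\tau)$, including the bookkeeping that a reset applied at one step persists (as a doubling offset) through all later steps rather than being a one-shot subtraction. Once that recurrence is in hand, the identification with greedy binary extraction and the case analysis against Eqns \ref{eq_logarithmic_approximation_multi_power} and \ref{eq_logarithmic_approximation_single_power} are routine; the only delicate points are the boundary behaviour at $\tau=0$ (where the membrane may already sit at or above threshold, so ``reaching $V_{th}$ from below'' must be read as the thresholding test $m_\tau\ge V_{th}$) and checking that no crossing can occur after $\tau=T^{out}-1$, which is immediate since the window was sized to the exponent range.
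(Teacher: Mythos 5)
Your proof is correct, but it takes a genuinely different route from the paper's. The paper first isolates the doubling property as a separate lemma, then argues by explicit spike-time formulas: it reformulates multi-power LA as a recursion over exponents $e_k = \lfloor \log_2 (a - \sum_{k'<k} 2^{e_{k'}}) \rfloor$, derives the desired LTC spike times from it, separately solves the neuron's firing recurrence $V_m^-(t^{out}_k) = (V_m^-(t^{out}_{k-1}) - 2^{e^{out}_{max}})\cdot 2^{t^{out}_k - t^{out}_{k-1}}$ in closed form, and matches the two families of floor-log expressions case by case (including a dedicated case $0 < V_m^-(T^{in}-1) < 2^{e^{out}_{min}}$ where the first spike time is shown to land beyond the window). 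You instead normalize to $u_\tau = m_\tau / 2^{e^{out}_{max}}$, collapse the dynamics to $u_{\tau+1} = 2(u_\tau - d_\tau)$ with $d_\tau = \mathbbm{1}(u_\tau \ge 1)$, and prove the invariant $u_\tau = 2^{\tau - e^{out}_{max}}(V - R_\tau)$, which identifies the firing pattern as greedy binary-digit extraction over exponents $e^{out}_{max}$ down to $e^{out}_{min}$. This buys a more unified and more easily verified argument: all three regimes of Eqn.~\ref{eq_logarithmic_approximation_multi_power} (below range, in range, saturated) fall out of one induction rather than three separate floor-log manipulations, and the equality of the extracted digit word with $\lfloor V/2^{e^{out}_{min}} \rfloor \cdot 2^{e^{out}_{min}}$ (resp.\ the all-ones word $2^{e^{out}_{max}+1} - 2^{e^{out}_{min}}$) is a standard fact about truncated binary expansions. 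What the paper's approach buys in exchange is the explicit first-spike-time formula (Eqn.~\ref{eq_first_output_spike_time_moderate_activation}), which also locates \emph{where} the undesired late spike will fall when $V < 2^{e^{out}_{min}}$ --- information the surrounding text uses. One side remark of yours is wrong, though immaterially: you claim no threshold crossing can occur after $\tau = T^{out}-1$, but the residual $V - \tilde{V} \in (0, 2^{e^{out}_{min}})$ keeps doubling and can cross $V_{th}$ shortly after the window (e.g.\ residual $\ge 2^{e^{out}_{min}-1}$ fires one step late); this is exactly the ``undesired late output spike'' the paper acknowledges and discards, and it does not affect the lemma, which only asserts correctness of the spike train \emph{within} the output time window --- so you should simply drop that claim rather than try to prove it.
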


We prove Lemma \ref{lem_ef_neuron_spike_generation_consistency} in Appendix \ref{sec_consistency_of_ef_neuron_spike_generation_with_ltc}.

\begin{theorem}
An EF neuron performs equivalent computation to the computation of an analog neuron with ReLU nonlinearity,
and encodes the result into its output LTC spike train,
if the following conditions hold:
\begin{enumerate}
\item All input spikes are received within its input time window, and
\item No output spikes are fired before the beginning of its output time window.
\end{enumerate}
\label{the_ef_relu_equivalence}
\end{theorem}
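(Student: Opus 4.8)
The plan is to show that the two stated conditions are precisely the hypotheses needed to chain Lemma \ref{lem_ef_membrane_potential_recovers_preactivation} and Lemma \ref{lem_ef_neuron_spike_generation_consistency}, and that the quantity they jointly produce coincides with the output of an analog ReLU neuron applied to the logarithmically-approximated inputs. First I would pin down what ``equivalent computation'' means here: an analog ReLU neuron with weights $w_i$ acting on inputs whose LAs are $\tilde{a}_i$ outputs $\max\left(\sum_{i \in \Gamma} w_i \cdot \tilde{a}_i, 0\right)$, so the goal reduces to proving that the EF neuron's output LTC spike train encodes exactly this value.

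Next I would translate each condition into the corresponding lemma hypothesis by comparing time-window boundaries. Since the output time window is $\{ T^{in} - 1, \ldots , T^{in} + T^{out} - 2 \}$, Condition 2 (no output spike before the beginning of the output time window) is exactly the statement that no output spike is fired during $\{ 0, \ldots , T^{in} - 2 \}$, which is the hypothesis of Lemma \ref{lem_ef_membrane_potential_recovers_preactivation}. Applying that lemma yields $V_m^-(T^{in} - 1) = \sum_{i \in \Gamma} w_i \cdot \tilde{a}_i$. Likewise, since the input time window is $\{ 0, \ldots , T^{in} - 1 \}$, Condition 1 (all input spikes received within the input time window) is exactly the statement that no input spike arrives after the end of the input time window, which is the hypothesis of Lemma \ref{lem_ef_neuron_spike_generation_consistency}. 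Applying that lemma yields that the EF neuron generates an output LTC spike train encoding $\max\left(V_m^-(T^{in} - 1), 0\right)$ presented within its output time window.

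Finally I would substitute the expression for $V_m^-(T^{in} - 1)$ from the first lemma into the conclusion of the second, obtaining that the output spike train encodes $\max\left(\sum_{i \in \Gamma} w_i \cdot \tilde{a}_i, 0\right)$, which is precisely the ReLU output identified in the first step; this completes the argument.

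This theorem is essentially a corollary of the two lemmas, so rather than a single hard obstacle there are two points of care. The first is verifying that the two lemma hypotheses are mutually compatible and non-interfering: Condition 2 constrains only the timing of \emph{output} spikes while Condition 1 constrains only the timing of \emph{input} spikes, so they hold simultaneously and neither application invalidates the other. The second is stating explicitly that ``equivalence'' is with respect to the approximated activations $\tilde{a}_i$ rather than the exact activations $a_i$; the discrepancy between the two is the LA approximation error, which is handled separately by error-tolerant ANN training and lies outside the scope of this theorem.
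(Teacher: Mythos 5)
Your proposal is correct and follows exactly the paper's approach: the paper's own proof is the one-line observation that the theorem follows from Lemmas \ref{lem_ef_membrane_potential_recovers_preactivation} and \ref{lem_ef_neuron_spike_generation_consistency}, and you chain them in the same way, merely making explicit the translation of the two conditions into the lemmas' hypotheses and the substitution of $V_m^-(T^{in}-1)=\sum_{i\in\Gamma}w_i\cdot\tilde{a}_i$ into $\max(V_m^-(T^{in}-1),0)$. Your closing remark that the equivalence is with respect to the approximated activations $\tilde{a}_i$ (with LA error handled separately by the error-tolerant training) is a useful clarification consistent with the paper's intent.
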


\begin{proof}
Theorem \ref{the_ef_relu_equivalence} follows from Lemmas \ref{lem_ef_membrane_potential_recovers_preactivation} and \ref{lem_ef_neuron_spike_generation_consistency}.
\end{proof}

However, with the spike generation mechanism alone,
an EF neuron may fire undesired output spikes outside its output time window.
An undesired early output spike before the output time window interferes with input integration of the neuron.
In addition,
the output time window of a layer $l$ of EF neurons is the input time window of the next layer $l + 1$.
An undesired late output spike after the output time window interferes with output spike generation of the downstream neurons.
Undesired output spikes break the equivalence between EF neurons and analog ReLU neurons, which in turn degrades the performance of the SNN.

In order to prevent undesired output spikes of an EF neuron from affecting the downstream neurons,
we allow output spikes within the output time window to travel to the downstream neurons,
and discard undesired output spikes outside the output time window.
Furthermore, we reduce the chance for an EF neuron to fire an undesired early output spike
by suppressing excessively large activations of the corresponding analog ReLU neuron,
as detailed in Section \ref{sec_ann_training}.

Algorithm \ref{alg_neuron_dynamics} shows operations an EF neuron performs at every time step.
First, the membrane potential $V_m$ is doubled (Eqn. \ref{eq_psp_kernel}, \ref{eq_reset_by_subtraction} and \ref{eq_reset_to_zero}).
Then, the input current $I$ is calculated by summing up weights $w_i$ of the synapses that receive an input spike at the current time step (Eqn. \ref{eq_membrane_potential}).
The input current is scaled by the resistance $2^{e^{in}_{min}}$ (Eqn. \ref{eq_psp_kernel}) and the result is added to the membrane potential (Eqn. \ref{eq_membrane_potential}).
If the membrane potential is greater than or equal to the firing threshold $V_{th}$,
an output spike is fired,
and the membrane potential is reset accordingly (Eqn. \ref{eq_reset_by_subtraction} and \ref{eq_reset_to_zero}).

From Algorithm \ref{alg_neuron_dynamics}, it can be seen that the EF neuron model can be efficiently implemented in hardware with fixed-point arithmetic.
If $V_m$ is implemented as a fixed-point number, it can be doubled by a bit shift;
if $V_m$ is implemented as a floating-point number, it can be doubled by an addition to its exponent.
The multiplication by $2^{e^{in}_{min}}$ can be avoided by pre-computing $w^i \cdot 2^{e^{in}_{min}}$ for every synaptic weight $w_i$ and
using the scaled synaptic weights at run-time. The other arithmetic operations are additions and subtractions.

\subsection{Error-tolerant ANN training}
\label{sec_ann_training}



Both LTC approximation errors and undesired early output spikes
contribute to the performance gap between an ANN and the corresponding SNN.
We introduce the approximation errors into the activations of the ANN
by applying logarithmic approximation to every non-negative activation,
and rely on the training process to compensate for the approximation errors.
Furthermore, we regularize the loss function with the \emph{Excess Loss} to suppress excessively large activations,
which in turn reduces the chance for an EF neuron to fire an undesired early output spike.

For every analog neuron of the ANN,
we apply LA to its non-negative activations,
so that the downstream neurons receive the approximate activations instead of the original activations.
The variant of LA corresponds to the variant of LTC used to generate the corresponding spike train in the SNN.
Negative pre-activations of the output layer are not approximated using LA and remain unchanged.
For each layer $l$,
the minimum exponent $e^{out, (l)}_{min}$ and the maximum exponent $e^{out, (l)}_{max}$
within the output exponent range are tuned as hyperparameters, similar to \cite{Miyashita16}.
To reduce the number of hyperparameters, we use the same output exponent range for all hidden layers.

As can be seen in Eqn. \ref{eq_logarithmic_approximation_multi_power} and \ref{eq_logarithmic_approximation_single_power},
the derivative of the LA $\tilde{a}$ w.r.t. the real-valued activation $a$ is zero almost everywhere,
which prevents backpropagation from updating parameters of the bottom layers of the ANN.
To allow gradients to pass through LA,
for both variants of LA,
we define the derivative of $\tilde{a}$ w.r.t. $a$ as
\begin{equation}
\frac{\mathrm{d} \tilde{a}}{\mathrm{d} a} =
\begin{cases}
1 & \text{if } a < 2^{e_{max} + 1}, \\
0 & \text{if } a \geq 2^{e_{max} + 1}.
\end{cases}
\label{eq_logarithmic_approximation_derivative}
\end{equation}


In order to suppress excessively large activations,
we define the \emph{Excess Loss} $L_{excess}$ as
\begin{align}
\begin{split}
& L_{excess} = \\
& \sum_m{\sum_l{\sum_j{(\max (a^{(l)}_{m, j} - (2^{e^{out, (l)}_{max} + 1} - 2^{e^{out, (l)}_{min}}), 0))^2}}} / 2
\end{split}
\end{align}
where the outer sum runs across training examples $m$,
the middle sum runs across all layers $l$ of the ANN,
the inner sum runs across all neurons $j$ of the layer $l$,
and $a^{(l)}_{m, j}$ is the activation of the $j$-th neuron of the layer $l$ for the $m$-th training example.
The excess loss punishes large positive activations of every layer $l$ that are greater than $2^{e^{out, (l)}_{max} + 1} - 2^{e^{out, (l)}_{min}}$.

\begin{algorithm}[H]
\caption{Operations performed by an EF neuron at every time step.}
\label{alg_neuron_dynamics}
\algsetblockdefx{InputIntegrationStart}{InputIntegrationEnd}{3}{}{Input integration:}{}
\algsetblockdefx{OutputSpikeGenerationStart}{OutputSpikeGenerationEnd}{9}{}{Output spike generation:}{}
\begin{algorithmic}
    \InputIntegrationStart
        \State $V_m := V_m \times 2$
        \State $I := \sum_{i: \text{synapse }i\text{ receives a spike}}{w_i}$
        \State $V_m := V_m + I \times 2^{e^{in}_{min}}$
    \OutputSpikeGenerationStart
        \If {$V_m \geq V_{th}$}
            \State Fire an output spike
            \If {the neuron is a multi-spike EF neuron}
                \State $V_m := V_m - V_{th}$
            \EndIf
            \If {the neuron is a single-spike EF neuron}
                \State $V_m := 0$
            \EndIf
        \EndIf
\end{algorithmic}
\end{algorithm}


The excess loss $L_{excess}$ is added to the loss function $L$ of the ANN, which is to be minimized by the training process:
\begin{equation}
L = L(x ; \theta) + \lambda L_{excess}
\end{equation}
where $L(x ; \theta)$ is the loss of the ANN on training data $x$ given parameters $\theta$,
and $\lambda > 0$ is a hyperparameter that controls the strength of the excess loss.

Although the excess loss does not completely prevent EF neurons from firing undesired early output spikes,
it makes undesired early output spikes unlikely.
Our experiments show that performance of an SNN with LTC is very close to the performance of the corresponding ANN with LA;
the negative impact of undesired early output spikes seems to be negligible.



\section{Experimental results}


\subsection{Experimental setup}

We conduct our experiments on a PC with an nVidia GeForce GTX 1060 GPU with a 6 GB frame buffer, a quad-core Intel Core i5-7300HQ CPU, and 8 GB main memory.
We use TensorFlow \cite{tensorflow2015} not only for training and testing ANNs,
but also for simulating SNNs.
For each SNN, we build a computation graph with operations performed by the SNN at every time step,
where every spiking neuron outputs either 1 or 0 to indicate whether it fires an output spike or not.
The computation graph is run once for every time step with appropriate input values.

We use the MNIST dataset of handwritten digits \cite{LeCun98}, which consists of 70000 28x28-pixel greyscale images of handwritten digits,
divided into a training set of 60000 images and a test set of 10000 images.
For hyperparameter tuning, we further divide the original training set into a training set of 55000 images and a validation set of 5000 images.
The test set is only used to test ANNs and SNNs after all hyperparameters are fixed.

We use two CNN architectures in our experiments.
One is the \emph{CNN-small} architecture (12C5@28x28-P2-64C5@12x12-P2-F10) with limited model capacity.
This architecture was also used in previous work \cite{Diehl15Conversion, Liu17}.
The other is the \emph{CNN-large} architecture (32C5@28x28-P2-64C5@14x14-P2-F1024-F10) \cite{TensorFlow2018}.

\subsection{Configuration of training and testing}

We consider 5 types of CNNs, each for both CNN-small and CNN-large:
\begin{enumerate}
\item \emph{CNN-original}: Original CNNs with zero biases, ReLU nonlinearity, and average pooling.
CNNs of this type are converted to two types of SNNs.
The \emph{SNN-rate-IF-rst-zero} type uses the reset-to-zero mechanism \cite{Diehl15Conversion},
while the \emph{SNN-rate-IF-rst-subtract} type uses the reset-by-subtraction mechanism \cite{Rueckauer2017}.
We refer to SNN-rate-IF-rst-zero and SNN-rate-IF-rst-subtract collectively as \emph{SNN-rate-IF}.
Since data-based normalization was shown to outperform model-based normalization, the weights of the CNNs are normalized with data-based normalization.
\item \emph{CNN-TF}: Same as CNN-original, except that the transfer function proposed in \cite{Zambrano17} is used as the nonlinearity.
The corresponding SNN type is \emph{SNN-ASN}, where SNNs consist of Adaptive Spiking Neurons (ASNs) \cite{Zambrano17}.
We do not implement the arousal mechanism.
\item \emph{CNN-CR}: Same as CNN-original,
except that clamped ReLU \cite{Rueckauer18} is used as the nonlinearity, and that max-pooling is used instead of average-pooling.
The corresponding SNN type is \emph{SNN-TTFS}, where SNNs consist of Time-To-First-Spike (TTFS) neurons \cite{Rueckauer18}.
\item \emph{CNN-multi-power-LA}: Same as CNN-original, except that all activations throughout the CNN are approximated with multi-power LA.
The corresponding SNN type is \emph{SNN-multi-spike-LTC}, where EF neurons in the hidden and output layers generate multi-spike LTC spike trains.
\item \emph{CNN-single-power-LA}: Same as CNN-multi-power-LA, except that activations of hidden neurons are approximated with single-power LA.
The corresponding SNN type is \emph{SNN-single-spike-LTC}, which is the same as SNN-multi-spike-LTC, except that the EF neurons in the hidden layers generate single-spike LTC spike trains.
\end{enumerate}
We refer to CNN-multi-power-LA and CNN-single-power-LA collectively as CNN-LA,
and SNN-multi-spike-LTC and SNN-single-spike-LTC collectively as SNN-LTC.
For each CNN type, we train five CNNs separately with the same hyperparameters and convert them to SNNs.

For SNN-rate-IF, the maximum input rate for generating an input spike train is 1 spike per time step,
since this maximum input rate was shown to achieve the best performance \cite{Diehl15Conversion}.
For CNN-TF and SNN-ASN, we adopt the hyperparameters for the transfer function and ASNs in \cite{Zambrano17}.
The resting threshold $\theta _0$ and the multiplicative parameter $m_f$ are set to a large value 0.1 to decrease firing rates of ASNs.
For both CNN-LA types, Table \ref{tbl_hyperparameters_cnn_la} shows exponent ranges for different layers and the strength of the excess loss.

\begin{table}[!t]
\renewcommand{\arraystretch}{1.3}
\caption{Exponent ranges and strength of excess loss for CNNs of CNN-LA types.}
\label{tbl_hyperparameters_cnn_la}
\centering
\begin{tabular}{c c c c c}
\hline
\multirow{2}{*}{\specialcell{CNN\\arch.}} & \multicolumn{3}{c}{Exponent ranges} & \multirow{2}{*}{\specialcell{Excess\\loss}} \\
& Input & Hidden & Output & \\
\hline
CNN-small & $\{ -7, \ldots , 0 \}$ & $\{ -3, \ldots , 0 \}$ & $\{ -3, \ldots , 4 \}$ & $0.1$ \\
CNN-large & $\{ -7, \ldots , 0 \}$ & $\{ -7, \ldots , -4 \}$ & $\{ -3, \ldots , 4 \}$ & $0.01$ \\
\hline
\end{tabular}
\end{table}

For SNN-rate-IF and SNN-ASN types, each of the SNNs is simulated for 500 time steps. For SNN-TTFS, simulation for an input image is stopped after the output layer fires the first output spike \cite{Rueckauer18}.

\subsection{Performance evaluation}

Table \ref{tbl_test_accuracies_ann_snn_conversion} compares final average test accuracies of our ANN-to-SNN conversion methods with those of previous ANN-to-SNN conversion methods.
The ``Method'' column shows SNN types, where the ``SNN-'' prefix is omitted.
``small'' and ``large'' in round brackets denote the CNN-small and CNN-large architectures, respectively.
The ``Dev.'' column shows the maximum difference between the test accuracy of an SNN and the test accuracy of the corresponding CNN.
For the SNN-rate-IF types, since input spike trains are generated stochastically,
we test each of these SNNs five times. For each combination of CNN architecture and CNN/SNN type,
the final average test accuracy in the table is obtained by averaging the final test accuracies of all test runs of the neural networks.

\begin{table}[!t]
\renewcommand{\arraystretch}{1.3}
\caption{Comparison of final average test accuracies of ANN-to-SNN methods.}
\label{tbl_test_accuracies_ann_snn_conversion}
\centering
\begin{tabular}{l c c c c}
\hline
\multirow{2}{*}{Method} & \multicolumn{2}{c}{Test accuracy (\%)} & \multirow{2}{*}{\specialcell{Dev.\\(\%)}} & \multirow{2}{*}{\# neurons} \\
& CNN & SNN & & \\
\hline
Rate-IF-rst-zero (small) \cite{Diehl15Conversion} & 99.25 & 99.20 & 0.16 & $1.37 \times 10^4$ \\
\specialcell{Rate-IF-rst-subtract\\(small) \cite{Rueckauer2017}} & 99.25 & 99.25 & 0.06 & $1.37 \times 10^4$ \\
ASN (small) \cite{Zambrano17} & \textbf{99.43} & \textbf{99.43} & 0.04 & $1.37 \times 10^4$ \\
TTFS (small) \cite{Rueckauer18} & 99.22 & 98.53 & 0.83 & $1.37 \times 10^4$ \\
\specialcell{Multi-spike-LTC (small)\\{[this work]}} & 99.23 & 99.23 & \textbf{0.00} & $1.37 \times 10^4$ \\
\specialcell{Single-spike-LTC (small)\\{[this work]}} & 99.03 & 99.03 & \textbf{0.00} & $1.37 \times 10^4$ \\
\hline
Rate-IF-rst-zero (large) \cite{Diehl15Conversion} & 99.27 & 99.24 & 0.09 & $4.80 \times 10^4$ \\
\specialcell{Rate-IF-rst-subtract\\(large) \cite{Rueckauer2017}} & 99.27 & 99.27 & 0.12 & $4.80 \times 10^4$ \\
ASN (large) \cite{Zambrano17} & 99.45 & \textbf{99.44} & 0.04 & $4.80 \times 10^4$ \\
TTFS (large) \cite{Rueckauer18} & \textbf{99.47} & 99.20 & 0.44 & $4.80 \times 10^4$ \\
\specialcell{Multi-spike-LTC (large)\\{[this work]}} & 99.38 & 99.38 & \textbf{0.00} & $4.80 \times 10^4$ \\
\specialcell{Single-spike-LTC (large)\\{[this work]}} & 99.41 & 99.41 & 0.02 & $4.80 \times 10^4$ \\
\hline
Rate-LIF-Softplus \cite{Hunsberger15} & N/A & 98.36 & N/A & 710 \\
Rate-LIF-Noisy-Softplus \cite{Liu17} & 99.05 & 98.85 & 0.20 & $1.37 \times 10^4$ \\
\hline
\end{tabular}
\end{table}

For the CNN-small architecture,
SNN-multi-spike-LTC achieves an average test accuracy that is lower than that of SNN-ASN and similar to those of the SNN-rate-IF types.
SNN-single-spike-LTC achieves a lower average test accuracy than those of SNN-multi-spike-LTC and the SNN-rate-IF types.
Both SNN-LTC types achieve a significantly higher average test accuracy than SNN-TTFS.

The difference in average test accuracy between SNN-rate-IF, SNN-ASN, and SNN-LTC is closely related to the model capacities of the corresponding CNN types.
With a small exponent range size (4 for hidden layers),
multi-power LA significantly decreases the precision of activations by mapping them to a few discrete values.
The decrease in precision leads to a decrease in the model capacity of CNN-multi-power-LA.
Hence multi-power LA can be seen as a regularizer.
Single-power LA is a stronger regularizer than multi-power LA,
since it further decreases the precision for activations.
By contrast, the transfer function of CNN-TF maps real-valued activations to an interval of real numbers,
which allows for much higher precision than the logarithmic approximations.
Hence, the transfer function is a weaker regularizer than the logarithmic approximations.

For a small CNN architecture like CNN-small,
which has limited model capacity even if all activations are real values,
the strong regularization of the logarithmic approximations has a negative effect on the CNN-LA types' ability of modeling training data.
By contrast, the weak regularization of the transfer function has a negligible effect on CNN-TF's ability of modeling training data,
but helps it achieve a higher average test accuracy than CNN-original by mitigating overfitting.

For the CNN-large architecture, which has sufficient model capacity,
both the logarithmic approximations and the transfer function have negligible effect on the CNN types' ability of modeling training data;
they mitigate overfitting and help CNN-TF and the CNN-LA types achieve a higher average test accuracy than CNN-original.
Therefore, the SNN-LTC types outperform the SNN-rate-IF types and achieve similar average test accuracies to that of the SNN-ASN type.

As shown in the ``Dev.'' column of Table \ref{tbl_test_accuracies_ann_snn_conversion},
for the SNN-LTC types, the test accuracy of every SNN is very close to the test accuracy of the corresponding CNN.
The difference in test accuracy is slightly larger for CNN-TF and SNN-ASN,
and much larger for other CNN and SNN types, especially for CNN-CR and SNN-TTFS.
For CNN-large, the performance gap between SNN-TTFS and CNN-CR prevents SNN-TTFS from achieving a higher average test accuracy than the SNN-LTC types,
although CNN-CR achieves a higher average test accuracy than the CNN-LA types.
There seems to be a closer similarity in behavior between SNN-LTC and CNN-LA than between other SNN types and their corresponding CNN types.
The close similarity between SNN-LTC and CNN-LA in turn suggests that
the excess loss is very effective in preventing EF neurons from firing undesired early spikes;
the impact of few undesired early spikes is negligible.

For both CNN-large and CNN-small, the SNN-LTC types outperform SNN types based on LIF neurons.

\subsection{Computational cost evaluation}

In this section, we compare the computational cost of our ANN-to-SNN conversion method with related work \cite{Diehl15Conversion, Rueckauer2017, Zambrano17, Rueckauer18}.

In an SNN, every time a spike arrives at a synapse, which is referred to as a synaptic event,
a postsynaptic potential is added to the membrane potential of the postsynaptic neuron.
These operations contribute to most of the computational cost of an SNN.
We use the average number of synaptic events that an SNN processes for every input image as a metric for the computational cost of the SNN.
In addition, we also count the average number of spikes fired by all neurons of an SNN for every input image.

Figure \ref{fig_computational_cost_accuracy_cnn_small} shows the experimental results for CNN-small.
For each of SNN-rate-reset-zero, SNN-rate-reset-subtract, SNN-ASN, and SNN-TTFS,
the computational cost and test accuracy at every time step during a test run of an SNN are plotted as a point.
For every time step, these computational costs and test accuracies are averaged over all test runs of the SNN type.
The resulting average computational costs and average test accuracies are plotted as a line.
For SNN-LTC types, only the final computational cost and the final test accuracy are shown for every SNN.

\begin{figure*}[!t]
    \centering
    \subfloat[]{
        \includegraphics[width=\columnwidth]{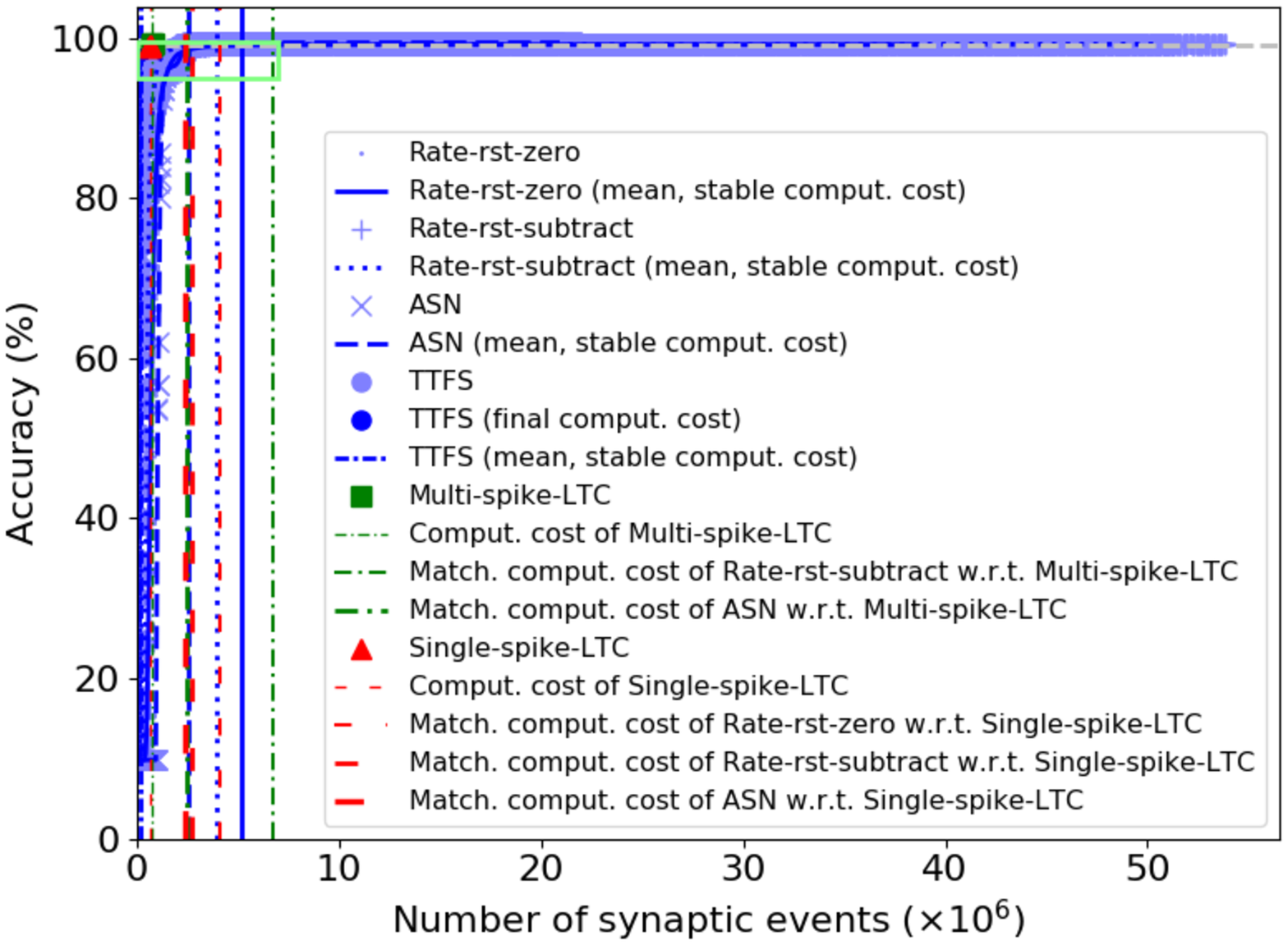}
        \label{fig_computational_cost_accuracy_cnn_small_synaptic_events_complete}}
    \subfloat[]{
        \includegraphics[width=\columnwidth]{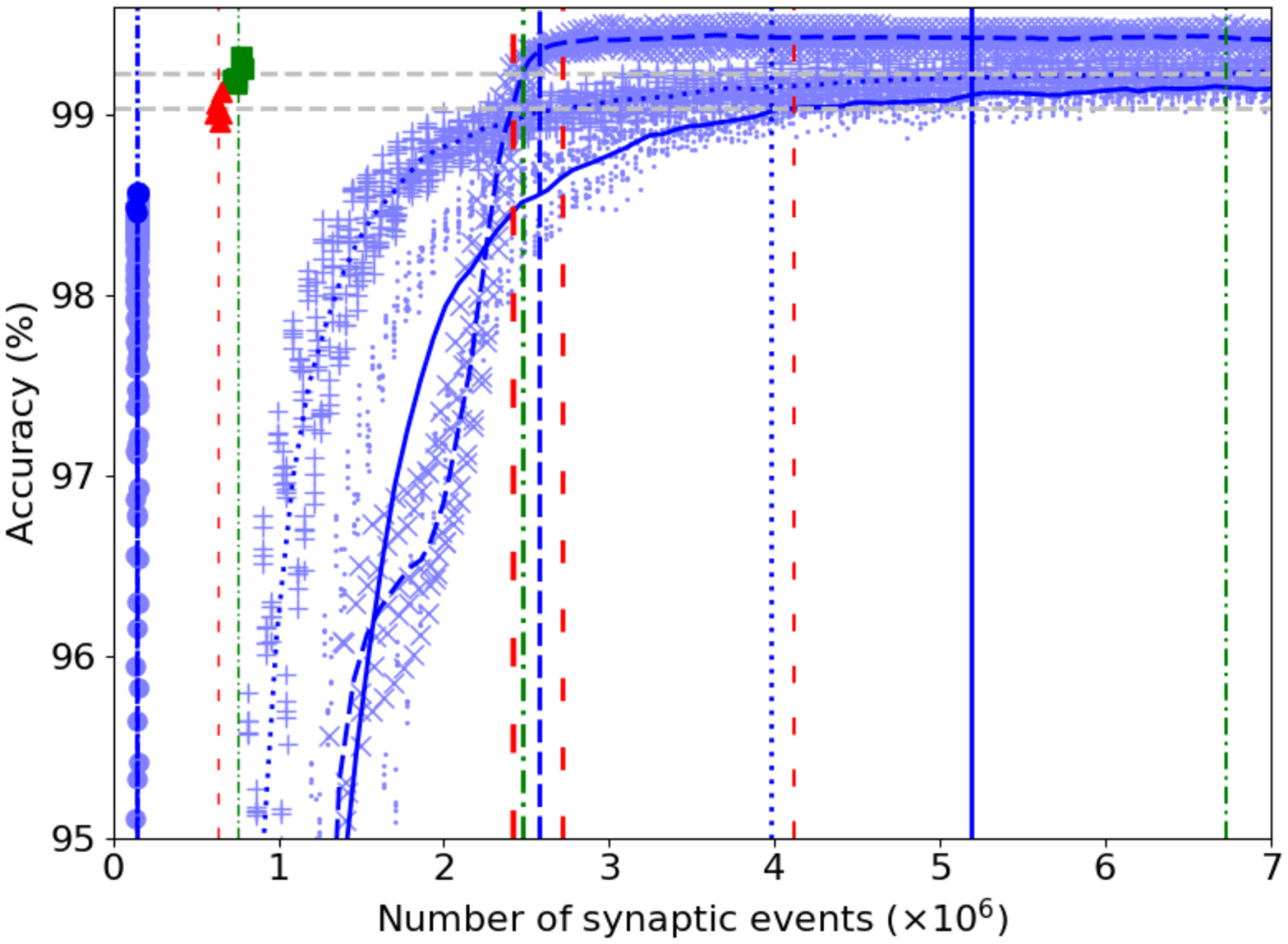}
        \label{fig_computational_cost_accuracy_cnn_small_synaptic_events_zoom_in}}
    \hfil
    \subfloat[]{
        \includegraphics[width=\columnwidth]{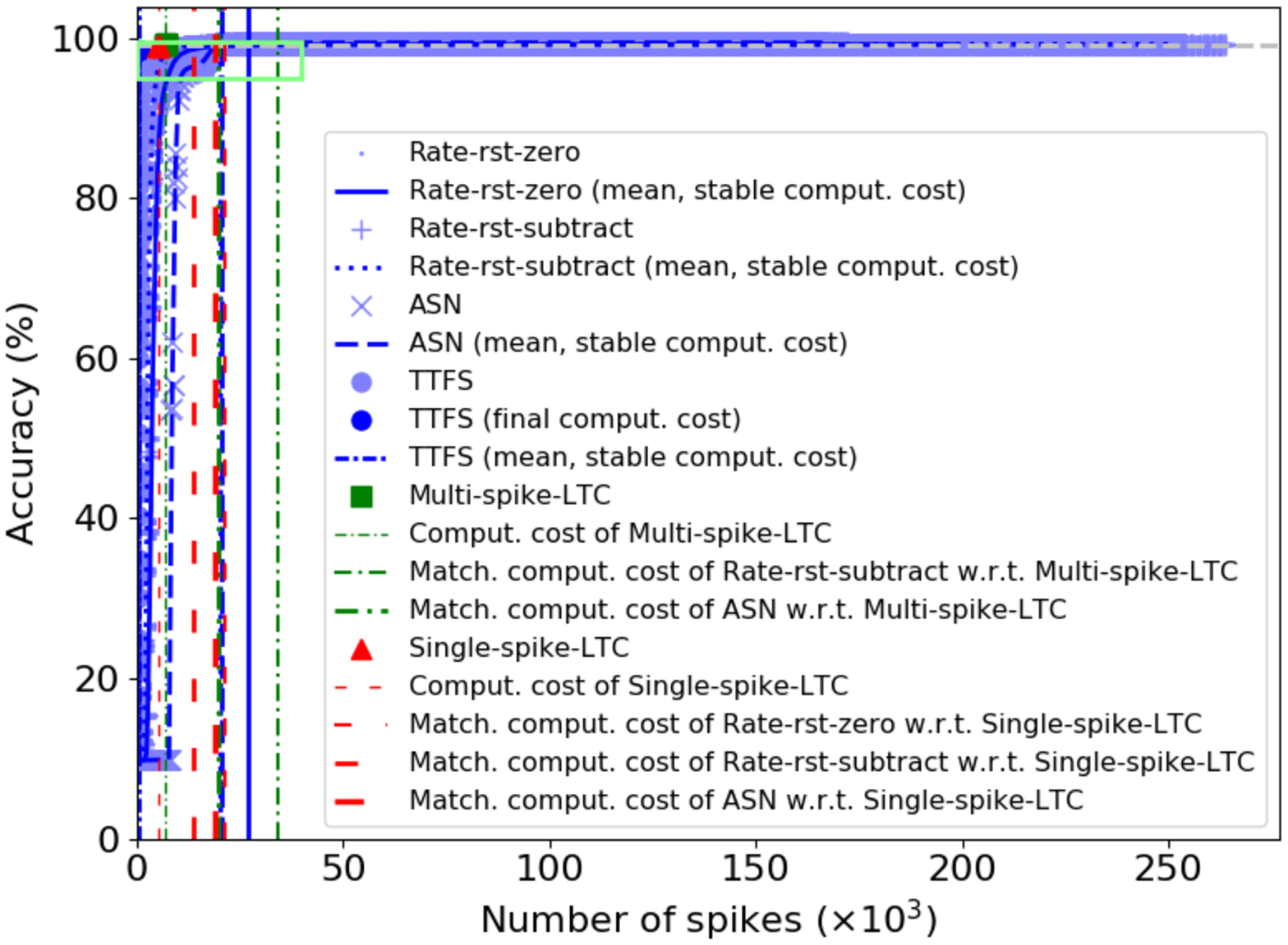}
        \label{fig_computational_cost_accuracy_cnn_small_spikes_complete}}
    \subfloat[]{
        \includegraphics[width=\columnwidth]{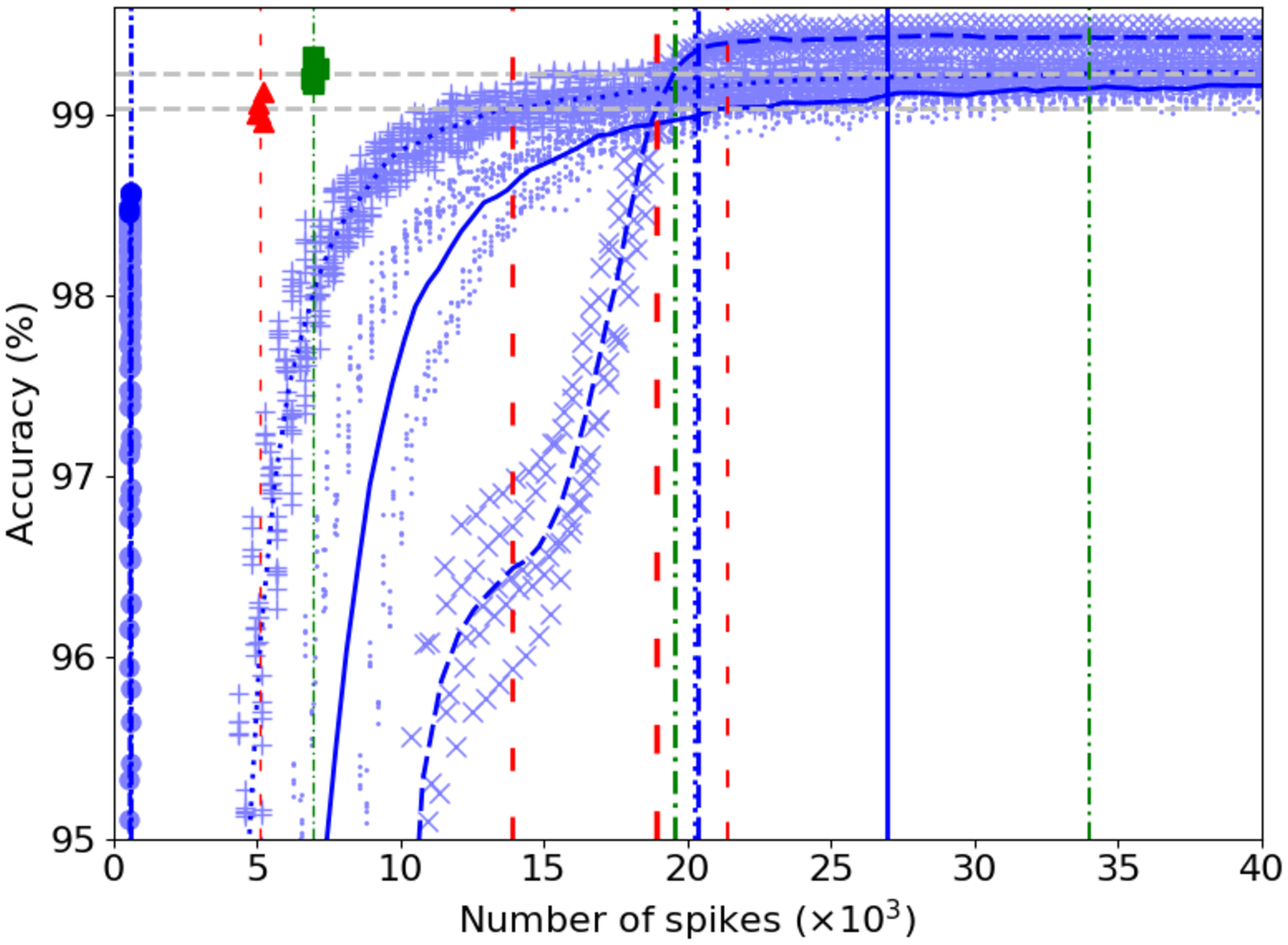}
        \label{fig_computational_cost_accuracy_cnn_small_spikes_zoom_in}}
    \caption[]{Computational costs and accuracies of SNNs with the CNN-small architecture.
        The ``SNN-'' prefix is omitted.
        \subref{fig_computational_cost_accuracy_cnn_small_synaptic_events_zoom_in} is a close-up view of the region in the green box in \subref{fig_computational_cost_accuracy_cnn_small_synaptic_events_complete},
        and \subref{fig_computational_cost_accuracy_cnn_small_spikes_zoom_in} is a close-up view of the region in the green box in \subref{fig_computational_cost_accuracy_cnn_small_spikes_complete}.}
    \label{fig_computational_cost_accuracy_cnn_small}
\end{figure*}

As shown in Figure \ref{fig_computational_cost_accuracy_cnn_small},
SNN-LTC types achieve high test accuracies at low computational costs.
At the same average computational costs, the SNN-rate-IF types and the SNN-ASN type achieve significantly lower average test accuracies ranging from 9.8\% to 98\%.
The average test accuracies of SNN-rate-IF and SNN-ASN increase quickly with increasing computational costs at an early stage of the test runs,
and then fluctuate near their maximum values for a long time until the end of simulation.
The average test accuracy of SNN-TTFS increases rapidly with increasing computational costs at the end of simulation,
when the output layers of the SNNs fire their first output spikes.

In order to compare the ever-changing average computational costs of previous ANN-to-SNN conversion methods with the final average computational costs of the SNN-LTC types,
we find two kinds of \emph{reference computational costs} for each of SNN-rate-IF-rst-zero, SNN-rate-IF-rst-subtract, SNN-ASN, and SNN-TTFS.
One is the \emph{stable computational cost} where the average test accuracy converges to the final average test accuracy.
Specifically, we consider the average test accuracy to have converged if it remains within the $\pm 0.1\%$ range around the final average test accuracy until the end of the simulation time.
The other kind of reference computational costs are the \emph{matching computational costs} w.r.t. to each of the SNN-LTC types,
where the average test accuracy of the SNN-rate-IF, SNN-ASN, or SNN-TTFS type starts to surpass the average test accuracy of the SNN-LTC type.
The reference computational costs are marked with vertical lines in Figure \ref{fig_computational_cost_accuracy_cnn_small}.

Table \ref{tbl_computational_costs_cnn_small} compares computational costs of our ANN-to-SNN conversion methods with those of previous ANN-to-SNN conversion methods, for the CNN-small architecture.
For every SNN-rate-IF type and the SNN-ASN type,
both the stable computational costs and the matching computational costs are shown,
along with the ratios (in percentage) of the SNN-LTC types' computational costs to the reference computational costs.
The matching computational cost of SNN-rate-rst-zero w.r.t. SNN-multi-spike-LTC is not shown,
because the average test accuracy of SNN-multi-spike-LTC is higher than the highest average test accuracy of SNN-rate-rst-zero.
The matching computational costs of SNN-TTFS are not shown for the same reason.

As shown in Table \ref{tbl_computational_costs_cnn_small},
the average computational costs of the SNN-LTC types are much lower than the reference computational costs of the SNN-rate-IF types and the SNN-ASN type.
Compared with the SNN-rate-IF types,
SNN-multi-spike-LTC achieves a similar average test accuracy
while reducing the computational cost by more than 80\% in terms of synaptic events and more than 65\% in terms of spikes;
SNN-single-spike-LTC reduces the computational cost by more than 76\% in terms of synaptic events and more than 63\% in terms of spikes,
at the cost of a decrease of 0.22\% in final average test accuracy.
Compared with the SNN-ASN type,
SNN-multi-spike-LTC reduces the computational cost by more than 69\% in terms of synaptic events and more than 64\% in terms of spikes,
at the cost of a decrease of 0.2\% in final average test accuracy;
SNN-single-spikc-LTC reduces the computational cost by more than 73\% in terms of synaptic events and more than 72\% in terms of spikes,
at the cost of a decrease of 0.4\% in final average test accuracy.
Compared with SNN-single-spike-LTC, SNN-multi-spike-LTC achieves a higher average test accuracy at a higher average computational cost.

\begin{table*}[!t]
\renewcommand{\arraystretch}{1.3}
\caption{Comparison of computational costs of SNN types with the CNN-small architecture.}
\label{tbl_computational_costs_cnn_small}
\centering
\begin{tabular}{|l|cc|cc|}
\hline \hline
& \multicolumn{2}{|c|}{\# Synaptic events} & \multicolumn{2}{|c|}{\# Spikes} \\
\hline \hline
& SNN-multi-spike-LTC & SNN-single-spike-LTC & SNN-multi-spike-LTC & SNN-single-spike-LTC \\
\hline
& $7.56 \times 10^5$ & $6.36 \times 10^5$ & $6.97 \times 10^3$ & $5.13 \times 10^3$ \\
\hline
SNN-rate-rst-zero (stable) & $5.18 \times 10^6$ ($14.58\%$) & $5.18 \times 10^6$ ($12.27\%$) & $2.69 \times 10^4$ ($25.87\%$) & $2.69 \times 10^4$ ($19.03\%$) \\
\hline
SNN-rate-rst-zero (matching) & N/A & $4.11 \times 10^6$ ($15.47\%$) & N/A & $2.14 \times 10^4$ ($23.96\%$) \\
\hline
SNN-rate-rst-subtract (stable) & $3.97 \times 10^6$ ($19.03\%$) & $3.97 \times 10^6$ ($16.02\%$) & $2.02 \times 10^4$ ($34.48\%$) & $2.02 \times 10^4$ ($25.37\%$) \\
\hline
SNN-rate-rst-subtract (matching) & $6.73 \times 10^6$ ($11.23\%$) & $2.71 \times 10^6$ ($23.45\%$) & $3.40 \times 10^4$ ($20.49\%$) & $1.38 \times 10^4$ ($36.93\%$) \\
\hline
SNN-ASN (stable) & $2.58 \times 10^6$ ($29.32\%$) & $2.58 \times 10^6$ ($24.68\%$) & $2.03 \times 10^4$ ($34.28\%$) & $2.03 \times 10^4$ ($25.22\%$) \\
\hline
SNN-ASN (matching) & $2.47 \times 10^6$ ($30.55\%$) & $2.41 \times 10^6$ ($26.40\%$) & $1.95 \times 10^4$ ($35.70\%$) & $1.89 \times 10^4$ ($27.10\%$) \\
\hline
SNN-TTFS (stable) & $\mathbf{1.46 \times 10^5}$ ($515.07\%$) & $\mathbf{1.46 \times 10^5}$ ($433.63\%$) & $\mathbf{5.88 \times 10^2}$ ($1184.28\%$) & $\mathbf{5.88 \times 10^2}$ ($871.45\%$) \\
\hline \hline
\end{tabular}
\end{table*}

Compared with SNN-TTFS, both SNN-LTC types achieve significantly higher average test accuracies,
but at much higher average computational costs in terms of synaptic events.
However, for SNN-TTFS, the number of synaptic events is an underestimate of the true computational cost.
According to the membrane potential update rule (Equation (4) in \cite{Rueckauer18}),
a TTFS neuron keeps track of the synapses which have ever received an input spike,
and adds the sum of the synaptic weights to its membrane potential every time step.
The number of synaptic events accounts for the updates of the sum of synaptic weights,
not the updates of the membrane potential.
As shown in Table \ref{tbl_computational_costs_cnn_small_ttfs},
the number of membrane potential updates (other ADDs) dominates the true computational cost of SNN-TTFS.
The computational costs of the SNN-LTC types are similar to the true computational cost of SNN-TTFS.
The average computational cost of SNN-multi-spike-LTC is 5.20\% higher,
and the average computational cost of SNN-single-spike-LTC is 11.43\% lower.

\begin{table*}[!t]
\renewcommand{\arraystretch}{1.3}
\caption{Comparison of computational costs of SNN-LTC types and the SNN-TTFS type with the CNN-small architecture.}
\label{tbl_computational_costs_cnn_small_ttfs}
\centering
\begin{tabular}{l c c c}
\hline
& \# ADDs for synaptic events & \# Other ADDs & Comput. cost \\
\hline
SNN-TTFS (stable) & $\mathbf{1.46 \times 10^5}$ & $5.72 \times 10^5$ & $7.19 \times 10^5$ \\
SNN-multi-spike-LTC & $7.56 \times 10^5$ & $\mathbf{0}$ & $7.56 \times 10^5$ ($105.20\%$) \\
SNN-single-spike-LTC & $6.36 \times 10^5$ & $\mathbf{0}$ & $\mathbf{6.36 \times 10^5}$ ($\mathbf{88.57\%}$) \\
\hline
\end{tabular}
\end{table*}

Figure \ref{fig_computational_cost_accuracy_cnn_large} shows computational costs and test accuracies of SNNs with the CNN-large architecture.
The SNN-LTC types achieve high test accuracies at low computational costs.
At the average computational costs of the SNN-LTC types, the SNN-rate-IF types and the SNN-ASN type achieve very poor average test accuracies around 9.8\%.

\begin{figure*}[!t]
    \centering
    \subfloat[]{
        \includegraphics[width=\columnwidth]{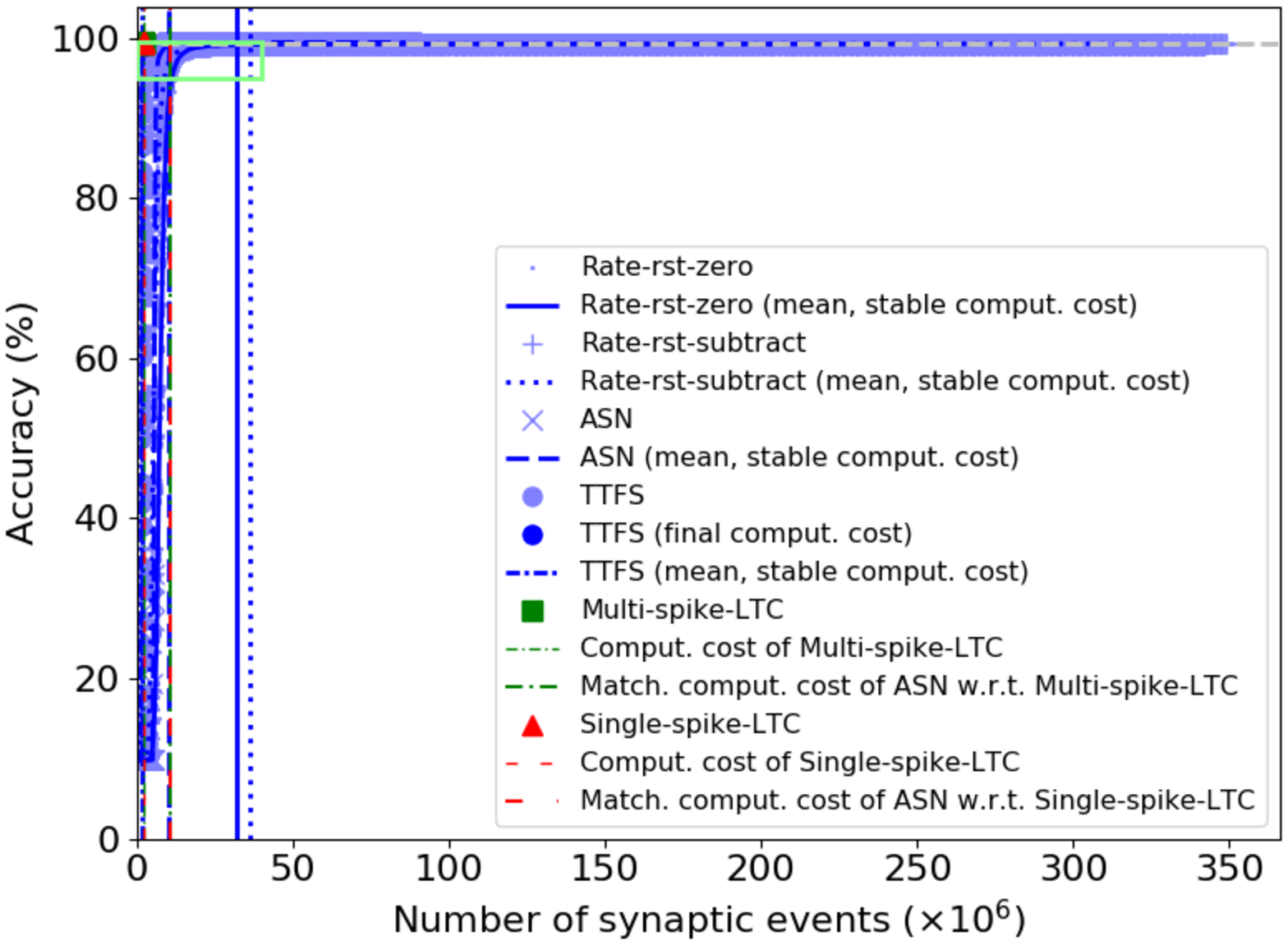}
        \label{fig_computational_cost_accuracy_cnn_large_synaptic_events_complete}}
    \subfloat[]{
        \includegraphics[width=\columnwidth]{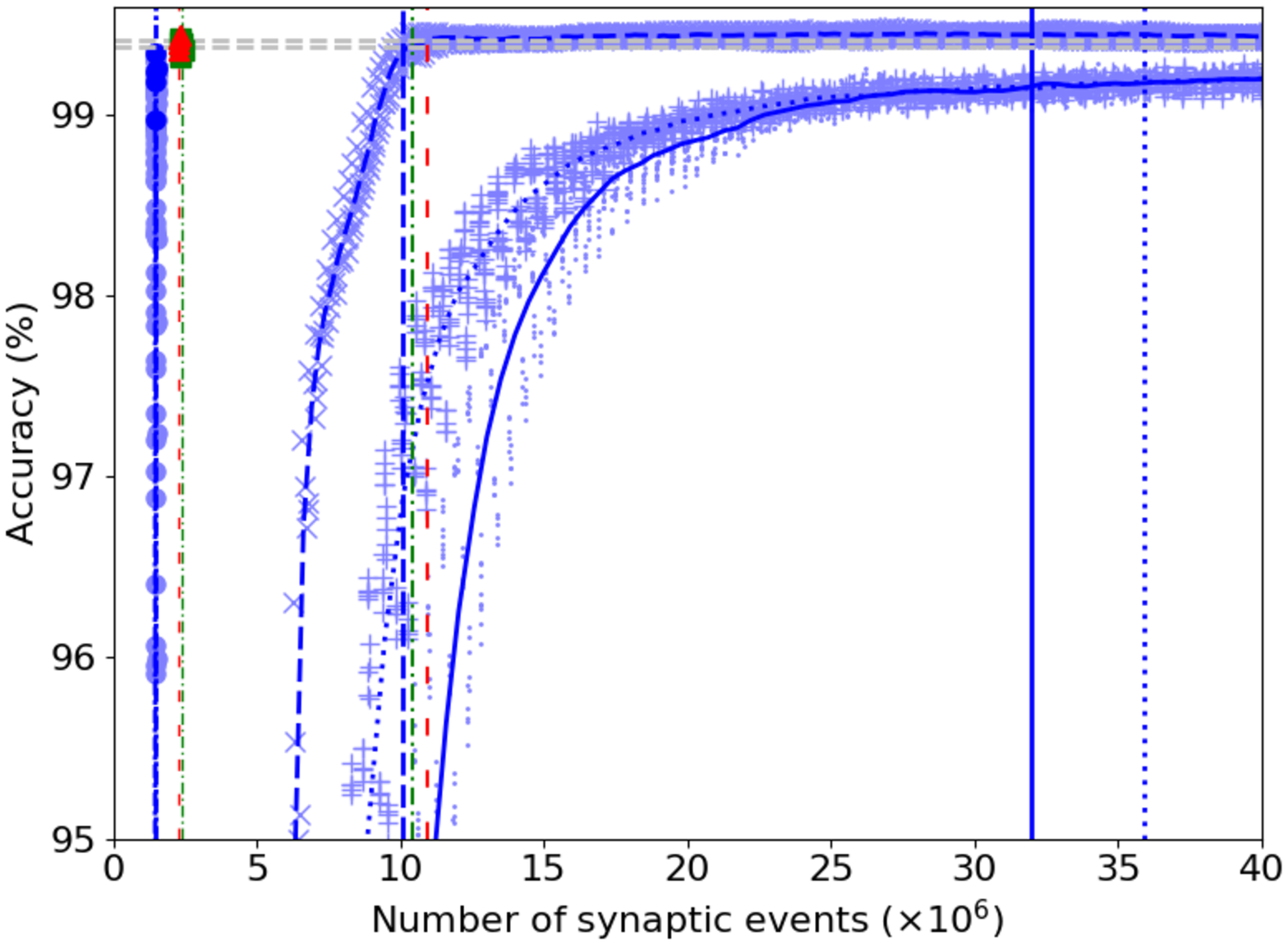}
        \label{fig_computational_cost_accuracy_cnn_large_synaptic_events_zoom_in}}
    \hfil
    \subfloat[]{
        \includegraphics[width=\columnwidth]{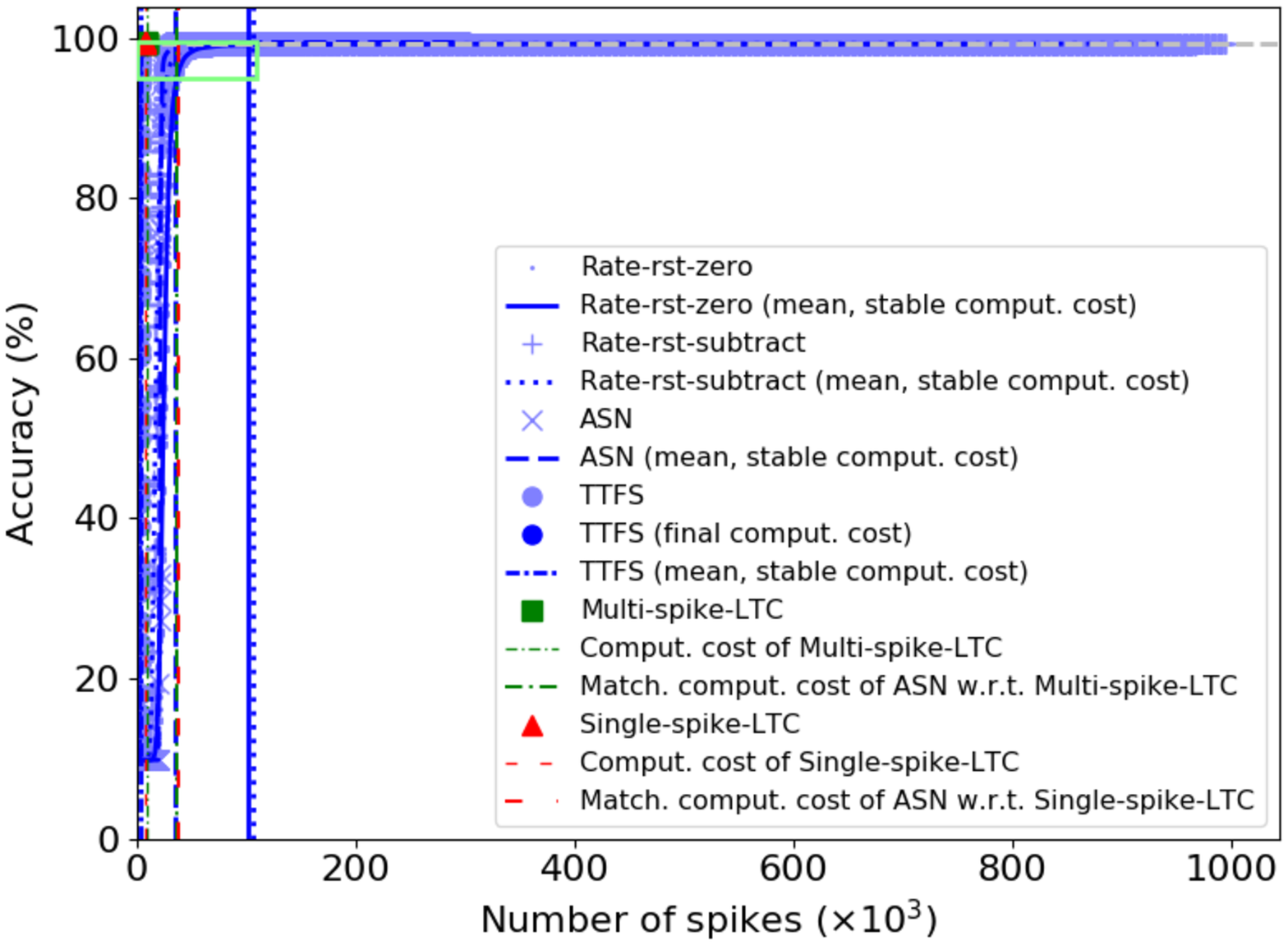}
        \label{fig_computational_cost_accuracy_cnn_large_spikes_complete}}
    \subfloat[]{
        \includegraphics[width=\columnwidth]{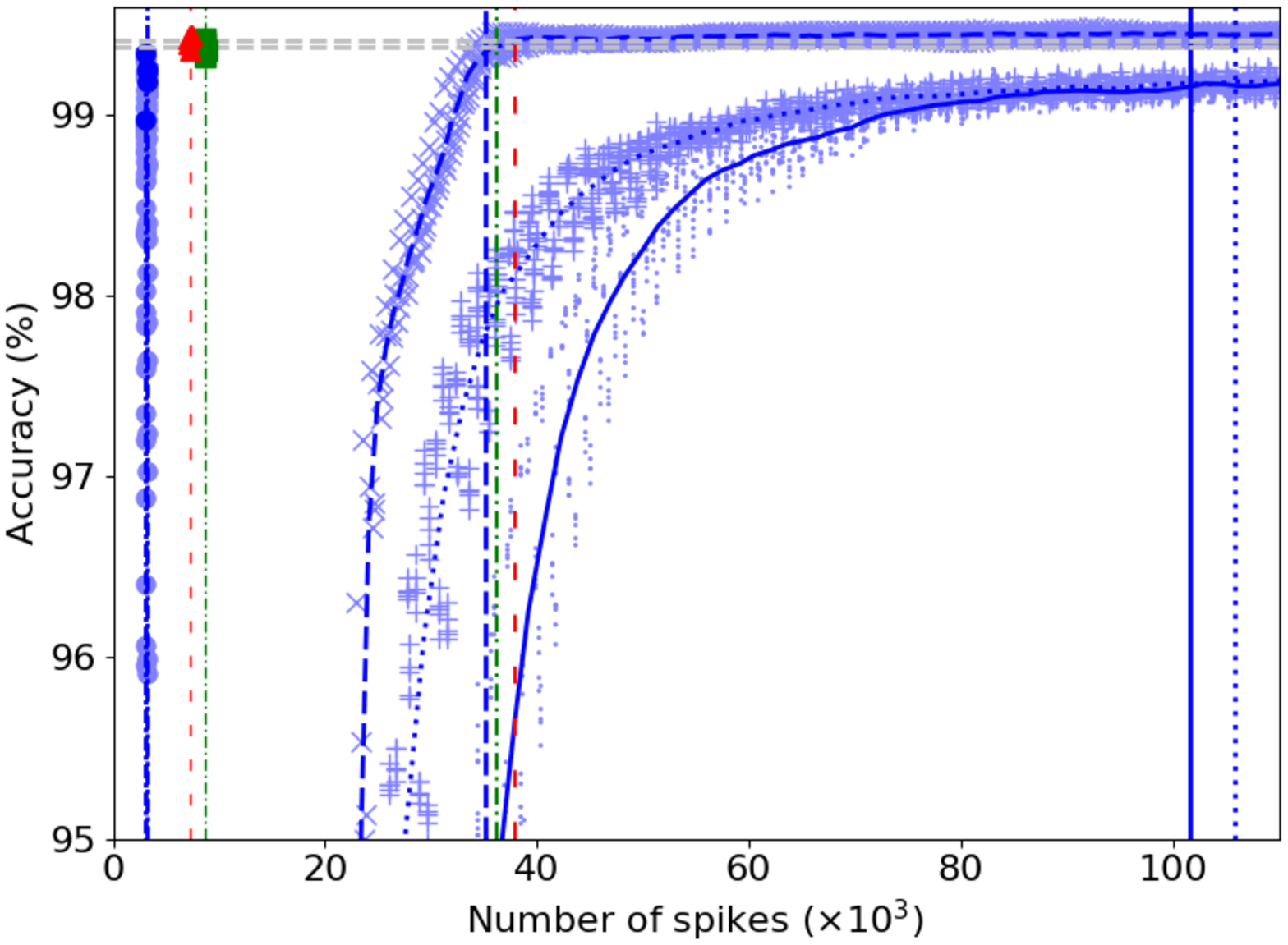}
        \label{fig_computational_cost_accuracy_cnn_large_spikes_zoom_in}}
    \caption[]{Computational costs and accuracies of SNNs with the CNN-large architecture.
        The ``SNN-'' prefix is omitted.
        \subref{fig_computational_cost_accuracy_cnn_large_synaptic_events_zoom_in} is a close-up view of the region in the green box in \subref{fig_computational_cost_accuracy_cnn_large_synaptic_events_complete},
        and \subref{fig_computational_cost_accuracy_cnn_large_spikes_zoom_in} is a close-up view of the region in the green box in \subref{fig_computational_cost_accuracy_cnn_large_spikes_complete}.}
    \label{fig_computational_cost_accuracy_cnn_large}
\end{figure*}

Similar to Table \ref{tbl_computational_costs_cnn_small},
Table \ref{tbl_computational_costs_cnn_large} compares computational costs of our ANN-to-SNN conversion methods with those of previous ANN-to-SNN conversion methods, for the CNN-large architecture.
For the SNN-rate-IF types and the SNN-TTFS type, only the stable computational costs are shown,
since the average test accuracies of the SNN-LTC types are higher than the highest average test accuracies of these types.
Compared with the SNN-rate-IF types,
the SNN-LTC types achieve higher average test accuracies
while reducing the computational cost by more than 92\% in terms of synaptic events and more than 91\% in terms of spikes.
Compared with the SNN-ASN type,
the SNN-LTC types reduce the computational cost by more than 76\% in terms of synaptic events and more than 75\% in terms of spikes,
at the cost of a slight decrease of less than 0.1\% in final average test accuracy.
SNN-single-spike-LTC slightly outperforms SNN-multi-spike-LTC at a lower computational cost.

\begin{table*}[!t]
\renewcommand{\arraystretch}{1.3}
\caption{Comparison of computational costs of SNN types with the CNN-large architecture.}
\label{tbl_computational_costs_cnn_large}
\centering
\begin{tabular}{|l|cc|cc|}
\hline \hline
& \multicolumn{2}{|c|}{\# Synaptic events} & \multicolumn{2}{|c|}{\# Spikes} \\
\hline \hline
& SNN-multi-spike-LTC & SNN-single-spike-LTC & SNN-multi-spike-LTC & SNN-single-spike-LTC \\
\hline
& $2.37 \times 10^6$ & $2.29 \times 10^6$ & $8.65 \times 10^3$ & $7.22 \times 10^3$ \\
\hline
SNN-rate-rst-zero (stable) & $3.19 \times 10^7$ ($7.43\%$) & $3.19 \times 10^7$ ($7.17\%$) & $1.01 \times 10^5$ ($8.51\%$) & $1.01 \times 10^5$ ($7.11\%$) \\
\hline
SNN-rate-rst-subtract (stable) & $3.59 \times 10^7$ ($6.60\%$) & $3.59 \times 10^7$ ($6.38\%$) & $1.05 \times 10^5$ ($8.17\%$) & $1.05 \times 10^5$ ($6.82\%$) \\
\hline
SNN-ASN (stable) & $1.00 \times 10^7$ ($23.62\%$) & $1.00 \times 10^7$ ($22.82\%$) & $3.50 \times 10^4$ ($24.67\%$) & $3.50 \times 10^4$ ($20.60\%$) \\
\hline
SNN-ASN (matching) & $1.04 \times 10^7$ ($22.73\%$) & $1.09 \times 10^7$ ($21.00\%$) & $3.62 \times 10^4$ ($23.87\%$) & $3.79 \times 10^4$ ($19.04\%$) \\
\hline
SNN-TTFS (stable) & $\mathbf{1.46 \times 10^6}$ ($161.98\%$) & $\mathbf{1.46 \times 10^6}$ ($156.47\%$) & $\mathbf{3.10 \times 10^3}$ ($278.53\%$) & $\mathbf{3.10 \times 10^3}$ ($232.60\%$) \\
\hline \hline
\end{tabular}
\end{table*}

Both SNN-LTC types achieve higher average test accuracies than the SNN-TTFS type.
As shown in Table \ref{tbl_computational_costs_cnn_large_ttfs},
SNN-multi-spike-LTC and SNN-single-spike-LTC reduce the average computational cost by 41.22\% and 43.22\%, respectively.

\begin{table*}[!t]
\renewcommand{\arraystretch}{1.3}
\caption{Comparison of computational costs of SNN-LTC types and the SNN-TTFS type with the CNN-large architecture.}
\label{tbl_computational_costs_cnn_large_ttfs}
\centering
\begin{tabular}{l c c c}
\hline
& \# ADDs for synaptic events & \# Other ADDs & Comput. cost \\
\hline
SNN-TTFS (stable) & $\mathbf{1.46 \times 10^6}$ & $2.57 \times 10^6$ & $4.03 \times 10^6$ \\
SNN-multi-spike-LTC & $2.37 \times 10^6$ & $\mathbf{0}$ & $2.37 \times 10^6$ ($58.78\%$) \\
SNN-single-spike-LTC & $2.29 \times 10^6$ & $\mathbf{0}$ & $\mathbf{2.29 \times 10^6}$ ($\mathbf{56.78\%}$) \\
\hline
\end{tabular}
\end{table*}

\section{Conclusions}

In this work, we propose an ANN-to-SNN conversion method based on novel Logarithmic Temporal Coding (LTC), and the Exponentiate-and-Fire (EF) neuron model.
Moreover, we introduce the approximation errors of LTC into the ANN,
and train the ANN to compensate for the approximation errors,
eliminating most of performance drop due to ANN-to-SNN conversion.
The experimental results show that the proposed method achieves competitive performance at a significantly lower computational cost.

In future work, we are going to explore the combination of our logarithmic temporal coding, which sparsifies spike trains in time,
and regularization techniques that sparsify spike trains across spiking neurons.
Sparsifying spike trains across both space and time may help achieve further computational efficiency.

\appendices
\section{An exponentiate-and-fire neuron generates a logarithmic temporal coding spike train}
\label{sec_consistency_of_ef_neuron_spike_generation_with_ltc}

In this section, we prove Lemma \ref{lem_ef_neuron_spike_generation_consistency} in Section \ref{sec_spike_generation}.

We observe that an EF neuron doubles its membrane potential every time step
if the neuron does not receive or fire any spike, as Lemma \ref{lem_exponentially_growing_membrane_potential} states.

\begin{lemma}
Let $t_0, t \in \mathbb{Z}$ be two time steps, where $t_0 < t$.
The pre-reset membrane potential of an EF neuron $V_m^-(t) = V_m(t_0) \cdot 2^{t - t_0}$ if the following conditions hold:
\begin{enumerate}
\item the EF neuron does not receive any input spike during the time interval $\{ t_0 + 1, \ldots , t \}$, and
\item the EF neuron does not fire any output spike during the time interval $\{ t_0 + 1, \ldots , t - 1 \}$.
\end{enumerate}
\label{lem_exponentially_growing_membrane_potential}
\end{lemma}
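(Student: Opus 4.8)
The plan is to decompose both $V_m(t_0)$ (Eqn.~\ref{eq_membrane_potential}) and $V_m^-(t)$ into their input (PSP) and output (AHP) contributions, and to show separately that each contribution to $V_m^-(t)$ equals $2^{t-t_0}$ times the corresponding contribution to $V_m(t_0)$. The structural fact I would exploit is that the PSP kernel $\epsilon$ (Eqn.~\ref{eq_psp_kernel}) and both AHP kernels $\eta$ (Eqn.~\ref{eq_reset_by_subtraction} and \ref{eq_reset_to_zero}) carry a factor $2^{t-t^{in}}$ or $2^{t-t^{out}}$ whose coefficient ($2^{e^{in}_{min}}$, $-V_{th}$, or the frozen value $-V_m^-(t^{out})$) does not depend on the current time $t$. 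Consequently, advancing the evaluation time from $t_0$ to $t$ multiplies every already-present kernel value by exactly $2^{t-t_0}$.

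For the PSP term, condition~(1) guarantees that no input spike has a spike time in $\{t_0+1,\ldots,t\}$, so the set of input spikes at synapse $i$ with $t^{in}\leq t$ coincides with the set with $t^{in}\leq t_0$; these are exactly the spikes that survive the indicator $\mathbbm{1}(t\geq t^{in})$ in $\epsilon$. Factoring $2^{t-t_0}$ out of $\epsilon(t-t^{in})$ for each such spike yields $h_i(t)=2^{t-t_0}\,h_i(t_0)$, and hence $\sum_{i\in\Gamma} w_i\,h_i(t)=2^{t-t_0}\sum_{i\in\Gamma} w_i\,h_i(t_0)$.

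For the AHP term, which is where the main subtlety lies, I would first use Eqn.~\ref{eq_pre_reset_membrane_potential}: because $V_m^-(t)$ subtracts the self-reset term $\eta(0)\,\mathbbm{1}(t\in\mathcal{F}^{out})$, the AHP sum contributing to $V_m^-(t)$ effectively ranges only over output spikes with $t^{out}\leq t-1$, even if an output spike is fired at $t$ itself---this is precisely why condition~(2) only excludes output spikes in $\{t_0+1,\ldots,t-1\}$. By condition~(2) this set coincides with the output spikes having $t^{out}\leq t_0$, which are exactly those appearing in $V_m(t_0)$. Since the coefficient of $2^{t-t^{out}}$ in $\eta$ is $t$-independent for both reset rules, $\eta(t-t^{out})=2^{t-t_0}\,\eta(t_0-t^{out})$ for every such spike, so the AHP contribution to $V_m^-(t)$ is $2^{t-t_0}$ times the AHP contribution to $V_m(t_0)$.

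Adding the two scaled contributions gives $V_m^-(t)=2^{t-t_0}\,V_m(t_0)$. I expect the one delicate point to be the bookkeeping around the pre-reset correction: one must check that removing the $\eta(0)$ term shifts the upper limit of the AHP sum from $t$ to $t-1$, so that conditions~(1) and~(2) line up exactly with the ranges of input and output spikes that remain, and that for the single-spike rule the coefficient $-V_m^-(t^{out})$ is a genuine constant fixed at firing time rather than a function of the evaluation time.
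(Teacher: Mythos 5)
Your proposal is correct and takes essentially the same route as the paper: the paper's entire proof is a one-sentence appeal to the definitions in Eqns.~\ref{eq_membrane_potential} and \ref{eq_pre_reset_membrane_potential} together with the exponentially growing kernels of Eqns.~\ref{eq_psp_kernel}, \ref{eq_reset_by_subtraction} and \ref{eq_reset_to_zero}, which is precisely the kernel-homogeneity argument you spell out. Your elaboration of the one delicate point---that subtracting $\eta(0)\cdot\mathbbm{1}(t \in \mathcal{F}^{out})$ shifts the AHP sum's upper limit to $t-1$ so that conditions (1) and (2) exactly match the surviving input and output spike sets, and that the coefficient $-V_m^-(t^{out})$ in the single-spike reset is frozen at firing time rather than depending on the evaluation time---correctly supplies the bookkeeping the paper leaves implicit.
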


\begin{proof}
Lemma \ref{lem_exponentially_growing_membrane_potential} follows from
the definition of the membrane potential (Eqn. \ref{eq_membrane_potential}),
the definition of the pre-reset membrane potential (Eqn. \ref{eq_pre_reset_membrane_potential}),
and the exponentially growing postsynaptic potential and afterhyperpolarizing potential kernels (Eqn. \ref{eq_psp_kernel}, \ref{eq_reset_by_subtraction} and \ref{eq_reset_to_zero}).
\end{proof}

With Lemma \ref{lem_exponentially_growing_membrane_potential},
we prove Lemma \ref{lem_ef_neuron_spike_generation_consistency} below.

\begin{proof}
Depending on the value of $V_m^-(T^{in} - 1)$,
there are four cases:
$V_m^-(T^{in} - 1) \leq 0$,
$0 < V_m^-(T^{in} - 1) < 2^{e^{out}_{min}}$,
$2^{e^{out}_{min}} \leq V_m^-(T^{in} - 1) < 2^{e^{out}_{max} + 1}$,
and $V_m^-(T^{in} - 1) \geq 2^{e^{out}_{max} + 1}$.

If $V_m^-(T^{in} - 1) \leq 0$, the logarithmic approximation of $\max (V_m^-(T^{in} - 1), 0)$ is 0,
and the desired LTC spike train contains no spikes.
For the EF neuron, by Lemma \ref{lem_exponentially_growing_membrane_potential},
$V_m^-(t)$ remains zero or negative during the output time window,
and the neuron does not fire any output spike during this time interval.
Hence, the neuron generates the desired LTC spike train within its output time window.

If $0 < V_m^-(T^{in} - 1) < 2^{e^{out}_{min}}$,
with exponent range $\{ e^{out}_{min}, \ldots , e^{out}_{max} \}$,
the logarithmic approximation of $\max (V_m^-(T^{in} - 1), 0)$ is 0,
and the desired LTC spike train contains no spikes.

For the EF neuron, by Lemma \ref{lem_exponentially_growing_membrane_potential},
the first output spike time $t^{out}_0$ satisfies the following condition:
\begin{equation}
    V_m^-(t^{out}_0) = V_m^-(T^{in} - 1) \cdot 2^{t^{out}_0 - (T^{in} - 1)} \geq V_{th}
    \label{ie_first_output_spike_time}
\end{equation}
Solving the equation, we have
\begin{equation}
    t^{out}_0 = e^{out}_{max} - \lfloor \log_2{V_m^-(T^{in} - 1)} \rfloor + (T^{in} - 1)
    \label{eq_first_output_spike_time_moderate_activation}
\end{equation}
since $V_m^-(T^{in} - 1) < 2^{e^{out}_{min}}$, $t^{out}_0 > T^{in} + T^{out} - 2$.
In other words, the neuron fires the first output spike after the end of its output time window.
Hence, the neuron generates the desired LTC spike train within its output time window.

For the remaining cases,
we derive the spike times of the desired LTC spike train and the output spike times of the EF neuron,
and show that the output spike train of the EF neuron is consistent with the desired LTC spike train at the end of this proof.

The case where $2^{e^{out}_{min}} \leq V_m^-(T^{in} - 1) < 2^{e^{out}_{max} + 1}$ corresponds to
the case of Eqn. \ref{eq_logarithmic_approximation_multi_power} where $2^{e_{min}} \leq a < 2^{e_{max} + 1}$.
In this case, Eqn. \ref{eq_logarithmic_approximation_multi_power} can be formulated as
\begin{equation}
    \tilde{a} = \sum_{k}{2^{e_k}}
    \label{eq_logarithmic_approximation_multi_power_recursive}
\end{equation}
\begin{equation}
    e_k =
    \begin{cases}
        \lfloor \log_2{a} \rfloor & \text{if } k = 0, \\
        \lfloor \log_2{(a - \sum_{k' = 0}^{k - 1}{2^{e_{k'}}})} \rfloor & \text{if } k > 0
    \end{cases}
    \label{eq_logarithmic_approximation_multi_power_recursive_exponents}
\end{equation}
\begin{equation}
    \forall k, e_k \geq e_{min}
    \label{ie_logarithmic_approximation_multi_power_recursive_exponents_condition}
\end{equation}
where the sum in Eqn. \ref{eq_logarithmic_approximation_multi_power_recursive} runs across exponents $\{ e_0, e_1, \ldots \}$ from $\lfloor \log_2{a} \rfloor$ to the smallest $e_k \geq e_{min}$.
Note that $2^{e_0}$ gives the single-power LA of $a$.
By substituting $e^{out}_k = e^{out}_{max} - t^{out}_k$ and $a = V_m^-(T^{in} - 1)$ into
Eqn. \ref{eq_logarithmic_approximation_multi_power_recursive_exponents},
we derive the spike times of the desired LTC spike train:
\begin{equation}
    t^{out}_k = e^{out}_{max} - \lfloor \log_2{(V_m^-(T^{in} - 1) - \sum_{k' = 0}^{k - 1}{2^{e^{out}_{max} - t^{out}_{k'}}})} \rfloor
    \label{eq_ltc_spike_times_moderate_activation}
\end{equation}
where $t^{out}_k$ is the $(k + 1)$-th output spike time.
For both multi-spike LTC and single-spike LTC, Eqn. \ref{eq_ltc_spike_times_moderate_activation} gives the first spike time $t^{out}_0$.
For multi-spike LTC, Eqn. \ref{eq_ltc_spike_times_moderate_activation} also gives subsequent spike times.
By further substituting
Eqn. \ref{eq_logarithmic_approximation_multi_power_recursive_exponents} and $e^{out}_{min} = e^{out}_{max} - (T^{out} - 1)$
into Inequality \ref{ie_logarithmic_approximation_multi_power_recursive_exponents_condition},
we derive constraints on the spike times:
\begin{equation}
    \forall k,
    V_m^-(T^{in} - 1) - \sum_{k' = 0}^{k - 1}{2^{e^{out}_{max} - t^{out}_{k'}}} \geq 2^{e^{out}_{max} - (T^{out} - 1)}
    \label{ie_ltc_spike_times_moderate_activation_condition_membrane_potential}
\end{equation}
\begin{equation}
    \forall k,
    t^{out}_k \leq T^{out} - 1
    \label{ie_ltc_spike_times_condition_spike_time}
\end{equation}

For the EF neuron, every output spike time $t^{out}_k$ within the output time window satisfies the following conditions:
\begin{equation}
    V_m^-(t^{out}_k) \geq V_{th}
    \label{ie_output_spike_time_condition_membrane_potential}
\end{equation}
\begin{equation}
    t^{out}_k \leq T^{in} + T^{out} - 2
    \label{ie_output_spike_time_condition_spike_time}
\end{equation}

The first output spike may be fired either at time $t^{out}_0 = T^{in} - 1$, if $V_m^-(T^{in} - 1) \geq 2^{e^{out}_{max}}$; or at time $t^{out}_0 > T^{in} - 1$ (Lemma \ref{lem_exponentially_growing_membrane_potential}), if $V_m^-(T^{in} - 1) < 2^{e^{out}_{max}}$.
In both cases, the first output spike time $t^{out}_0$ satisfies Eqns. \ref{ie_first_output_spike_time} and \ref{eq_first_output_spike_time_moderate_activation}.

In the case of single-spike LTC, the EF neuron fires a single output spike at $t = t^{out}_0$.
In the case of multi-spike LTC, the EF neuron may fire subsequent output spikes.
Consider every two consecutive output spike times $t^{out}_{k - 1}$ and $t^{out}_k$, where $t^{out}_{k - 1} < t^{out}_k$.
By Lemma \ref{lem_exponentially_growing_membrane_potential},
the pre-reset membrane potentials $V_m^-(t^{out}_k)$ can be formulated as
\begin{align}
    V_m^-(t^{out}_k) & = (V_m^-(t^{out}_{k-1}) - 2^{e^{out}_{max}}) \cdot 2^{t^{out}_k - t^{out}_{k-1}} \\
    V_m^-(t^{out}_0) & = V_m^-(T^{in} - 1) \cdot 2^{t^{out}_0 - (T^{in} - 1)}
\end{align}
Solving the recurrence relation above, we have
\begin{equation}
    V_m^-(t^{out}_k) = 2^{t^{out}_k} \cdot (V_m^-(T^{in} - 1) \cdot 2^{-(T^{in} - 1)} - \sum_{k' = 0}^{k - 1}{2^{e^{out}_{max} - t^{out}_{k'}}})
    \label{eq_pre_reset_membrane_potential_at_spike_time}
\end{equation}
By substituting Eqn. \ref{eq_pre_reset_membrane_potential_at_spike_time} into
Inequality \ref{ie_output_spike_time_condition_membrane_potential} and
considering Inequality \ref{ie_output_spike_time_condition_spike_time},
we have
\begin{equation}
    V_m^-(T^{in} - 1) -  \sum_{k' = 0}^{k - 1}{2^{e^{out}_{max} - (t^{out}_{k'} - (T^{in} - 1))}} \geq 2^{e^{out}_{max} - (T^{out} - 1)}
    \label{ie_output_spike_time_moderate_activation_condition_membrane_potential}
\end{equation}
By substituting Eqn. \ref{eq_pre_reset_membrane_potential_at_spike_time} into
Inequality \ref{ie_output_spike_time_condition_membrane_potential} and
solving the resulting inequality for the minimum integer value for $t^{out}_k$,
we have
\begin{align}
    \begin{split}
        & t^{out}_k - (T^{in} - 1) = \\
        & e^{out}_{max} - \lfloor \log_2{(V_m^-(T^{in} - 1) - \sum_{k' = 0}^{k - 1}{2^{e^{out}_{max} - (t^{out}_{k'} - (T^{in} - 1))}})} \rfloor
    \end{split}
    \label{eq_output_spike_time_recurrence}
\end{align}

The case where $V_m^-(T^{in} - 1) \geq 2^{e^{out}_{max} + 1}$ corresponds to
the case of Eqn. \ref{eq_logarithmic_approximation_multi_power} where $a \geq 2^{e_{max} + 1}$.
In this case, Eqn. \ref{eq_logarithmic_approximation_multi_power} can be formulated as
\begin{equation}
    \tilde{a} = \sum_{k = 0}^{e_{max} - e_{min}}{2^{e_k}}
    \label{eq_logarithmic_approximation_multi_power_excessive_activation}
\end{equation}
\begin{equation}
    e_k = e_{max} - k
    \label{eq_logarithmic_approximation_multi_power_excessive_activation_exponents}
\end{equation}
Note that $2^{e_0}$ gives the single-power LA of $a$.
By substituting $e^{out}_k = e^{out}_{max} - t^{out}_k$ into
Eqn. \ref{eq_logarithmic_approximation_multi_power_excessive_activation_exponents},
we derive the spike times of the desired LTC spike train:
\begin{equation}
    \forall k \in \{ 0, \ldots , T^{out} - 1 \} ,
    t^{out}_k = k
    \label{eq_ltc_spike_times_excessive_activation}
\end{equation}
For both multi-spike LTC and single-spike LTC, Eqn. \ref{eq_ltc_spike_times_excessive_activation} gives the first spike time $t^{out}_0 = 0$.
For multi-spike LTC, Eqn. \ref{eq_ltc_spike_times_excessive_activation} also gives subsequent spike times.

For the EF neuron,
since $V_m^-(T^{in} - 1) \geq 2^{e^{out}_{max} + 1} > V_{th}$,
the first output spike time is
\begin{equation}
    t^{out}_0 = T^{in} - 1
\end{equation}

In the case of single-spike LTC, the EF neuron fires only a single output spike.
In the case of multi-spike LTC, suppose the EF neuron fires an output spike at the time step $t^{out}_k$.
By Lemma \ref{lem_exponentially_growing_membrane_potential},
\begin{equation}
    V_m^-(t^{out}_k + 1) = 2(V_m^-(t^{out}_k) - 2^{e^{out}_{max}})
\end{equation}
It is easy to see that, if $V_m^-(t^{out}_k) \geq 2^{e^{out}_{max} + 1}$,
then $V_m^-(t^{out}_k + 1) \geq 2^{e^{out}_{max} + 1} > V_{th}$,
and $t^{out}_{k + 1} = t^{out}_k + 1$ will be the next output spike time.
Since $V_m^-(t^{out}_0) = V_m^-(T^{in} - 1) \geq 2^{e^{out}_{max} + 1}$,
the EF neuron fires an output spike at every time step within its output time window.
Hence,
\begin{equation}
    \forall k \in \{ 0, \ldots , T^{out} - 1 \} ,
    t^{out}_k = T^{in} - 1 + k
    \label{eq_output_spike_time_excessive_activation}
\end{equation}

By comparing Eqn. \ref{eq_first_output_spike_time_moderate_activation} and \ref{eq_output_spike_time_recurrence} with Eqn. \ref{eq_ltc_spike_times_moderate_activation},
Inequality \ref{ie_output_spike_time_moderate_activation_condition_membrane_potential} with Inequality \ref{ie_ltc_spike_times_moderate_activation_condition_membrane_potential},
Inequality \ref{ie_output_spike_time_condition_spike_time} with Inequality \ref{ie_ltc_spike_times_condition_spike_time},
and Eqn. \ref{eq_output_spike_time_excessive_activation} with Eqn. \ref{eq_ltc_spike_times_excessive_activation},
it can be seen that the output spike train of the EF neuron within its output time window is consistent with the desired LTC spike train,
except that every output spike time of the EF neuron is $T^{in} - 1$ larger than the corresponding spike time of the desired LTC spike train.
The difference is due to the fact that the output time window of the EF neuron starts at the time step $T^{in} - 1$.

Therefore, in all cases, the EF neuron generates an LTC spike train that encodes $\max (V_m^-(T^{in} - 1), 0)$ within its output time window, completing the proof.
\end{proof}




\ifCLASSOPTIONcaptionsoff
  \newpage
\fi



\bibliographystyle{IEEEtran}
\bibliography{mybibfile}
\end{document}